\documentclass[]{custom}

\usepackage{amsmath,amsfonts,bm}

\def\eqref#1{equation~\ref{#1}}

\def\1{\bm{1}}

\DeclareMathAlphabet{\mathsfit}{\encodingdefault}{\sfdefault}{m}{sl}
\SetMathAlphabet{\mathsfit}{bold}{\encodingdefault}{\sfdefault}{bx}{n}

\newcommand{\R}{\mathbb{R}}

\usepackage{wrapfig}
\usepackage[font=small,labelfont=bf]{caption}
\usepackage{overpic}
\PassOptionsToPackage{numbers, sort&compress}{natbib}
\usepackage{amsmath}
\usepackage{amsthm}
\usepackage{amssymb}
\usepackage{colonequals}
\usepackage{thmtools}
\usepackage{xcolor}
\usepackage{mathtools}
\usepackage{dsfont}
\usepackage{enumitem}
\usepackage{float}
\usepackage{amsmath}
\usepackage{amsthm}
\usepackage{algorithm}      
\usepackage{algpseudocode}

\usepackage{caption}
\usepackage{varwidth}
\usepackage{tabularx}
\usepackage{tikz}
\usetikzlibrary{decorations.pathreplacing,calc}

\usepackage{xcolor}

\definecolor{ItoNavy}{HTML}{263B4A}
\definecolor{ItoTeal}{HTML}{4E7C78}
\definecolor{ItoRust}{HTML}{A05A3B}
\definecolor{ItoDark}{HTML}{2F3437}
\definecolor{ItoGray}{HTML}{6F777B}
\definecolor{ItoAccent}{HTML}{7A5C45}

\newtheorem{theorem}{Theorem}
\newtheorem{prop}{Proposition}
\newtheorem{defn}{Definition}
\newtheorem{cor}{Corollary}

\theoremstyle{remark}
\newtheorem*{rem}{Remark}

\usepackage{url}
\crefformat{equation}{(#2#1#3)}
\crefrangeformat{equation}{(#3#1#4) to (#5#2#6)}
\crefmultiformat{equation}{(#2#1#3)}{ and (#2#1#3)}{, (#2#1#3)}{ and (#2#1#3)}

\usepackage{array}
\newcolumntype{L}[1]{>{\raggedright\arraybackslash}p{#1}}
\def \R {\mathbb{R}}
\def \Ghat {\hat{G}}

\def \L {\mathcal{L}}
\def \PSD {\textnormal{PSD}}

\newtcolorbox{propbox}{
    enhanced,
    colback=accentgreen!4!bonebg!60!white,      
    colframe=accentgreen, 
    boxrule=0pt,
    leftrule=5pt,      
    arc=0pt,            
    left=8pt, right=8pt, top=8pt, bottom=8pt,
    fonttitle=\bfseries\sffamily,
    coltitle=juniper
}

\theoremstyle{remark}

\newcommand{\equalsenior}{\textsuperscript{\ensuremath{\dagger}}}

\title{It\^o maps for any-step SDEs}
\author[1]{Zhengkai Pan}
\author[1,2]{\quad Peter Potaptchik}
\author[1]{\quad Wenxi Yao}
\author[1,3]{\\Michael S. Albergo\equalsenior}
\author[2]{\; Jakiw Pidstrigach\equalsenior}

\contribution[1]{Harvard University}
\contribution[2]{University of Oxford}
\contribution[3]{Kempner Institute}

\abstract{Recent one-step generative models accelerate sampling by learning deterministic flow maps of the underlying dynamics. These methods rely on learning from ordinary differential equations, leaving open how to define an exact distillation procedure for stochastic dynamics. We introduce the \textbf{It\^o map}, an any-step stochastic flow map that takes an intermediate state and Brownian path and predicts future states in a single pass. The It\^o map formulation yields novel estimators for inference-time control by providing cheap, differentiable access to posterior samples. Empirically, It\^o maps produce diverse, conditionally valid endpoint samples from fixed intermediate states and support strong steering performance on synthetic and image-generation benchmarks. These results establish any-step SDE integration as a useful primitive for posterior sampling and stochastic control.}

\begin{document}
\maketitle
\vspace{-0.5cm}
\begingroup
\renewcommand{\thefootnote}{\fnsymbol{footnote}}
\footnotetext[2]{Equal senior contribution.}
\endgroup
\begin{figure}[H]
    \vspace{0.8em}
    \centering
    \includegraphics[
        width=0.96\linewidth,
        keepaspectratio,
        trim={0 0.4cm 0 0.2cm},
        clip
    ]{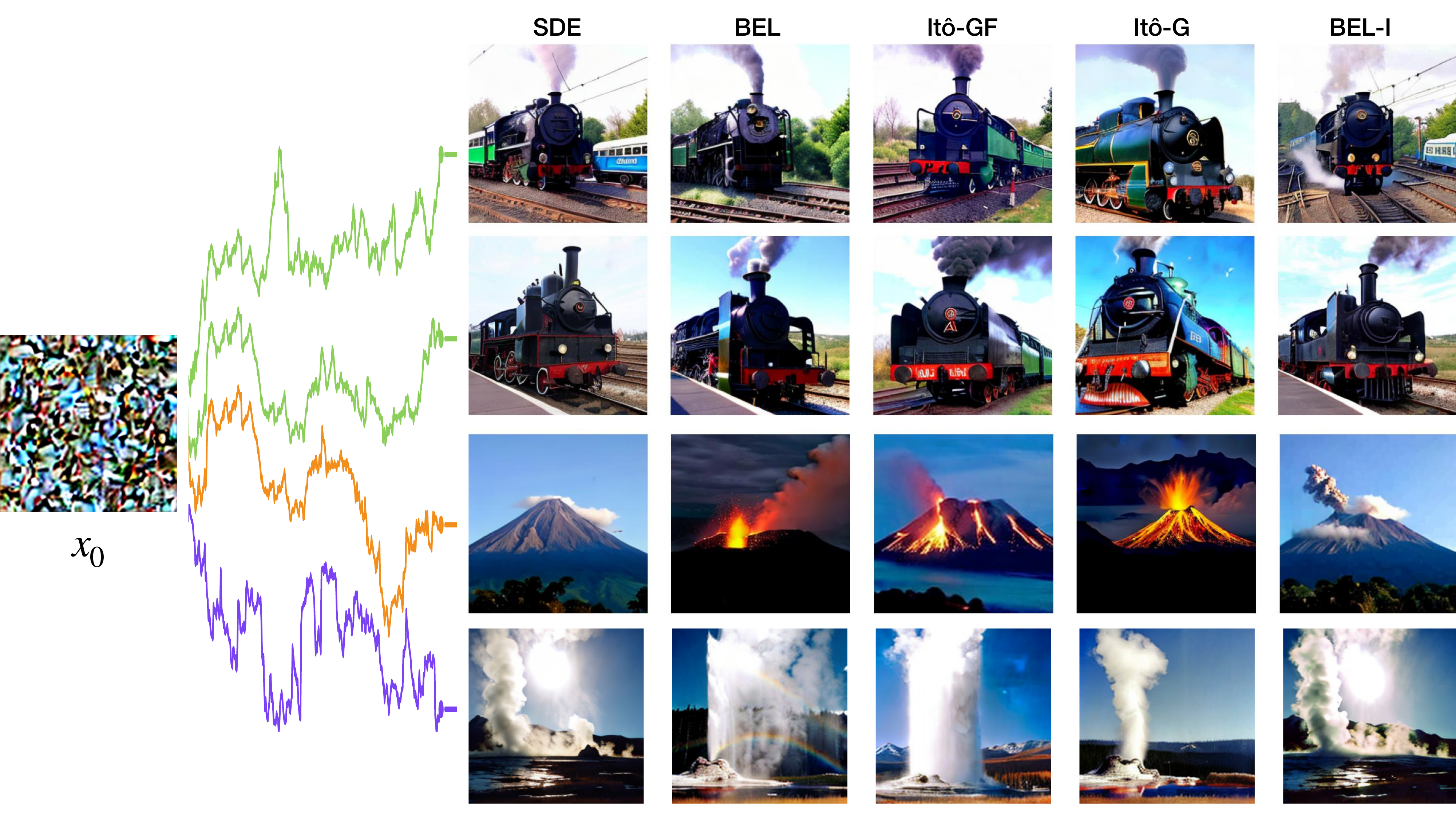}
    \caption{Inference-time steering with ImageReward. Prompt for \textbf{top 2 rows:} \textit{a steam locomotive with steam billowing}; for the \textbf{third row:} \textit{a volcano erupting with magma}; for the \textbf{fourth row:} \textit{a geyser erupting with a towering steam plume}. Each row uses a single shared Brownian trajectory.}
    \label{fig:frontpage}
\end{figure}

\section{Introduction}
Many state-of-the-art generative models transport a simple reference to a complex target by many small steps. The results are impressive, but inference is costly: it requires many sequential passes through large neural networks. Recent works have successfully turned many-step ODE samplers into one-step samplers by learning their flow \citep{consistency,howtobuild,dmf,tvm}. However, most of that story is fundamentally deterministic. These methods compress continuous-time dynamics into a direct map, typically aligned with the probability-flow ODE (PF-ODE) viewpoint \citep{song2020score,ddpm}. In contrast, score-based generative modeling is naturally defined through a denoising SDE \citep{song2020score,ddpm}, whose sampling dynamics remains stochastic, even though it shares the same marginals with the PF-ODE at each time. As a result, current one-step models largely learn fast transports, not fast stochastic transition operators. More importantly, deterministic one-step maps cannot represent the endpoint posterior from an intermediate noisy state.

The key missing object is the one-shot conditional law \begin{equation}
    p_{1\mid t}(\cdot\mid x)
\end{equation} from an intermediate state $x_t$ to a clean endpoint $x_1$. For many downstream applications, this conditional law is often the quantity of interest since it captures the residual uncertainty, supports posterior sampling and hence provides the expectation terms required by many optimal control estimators in inference-time steering. Deterministic one-step maps learn a point prediction of the endpoint; It\^o maps learn a sampler for the endpoint conditional law $p_{1\mid t}(\cdot\mid x)$.

Unlike ODEs -- which define deterministic flows learnable as a function of the initial condition -- SDE trajectories are not deterministic functions of the initial condition alone. Our key observation is that if we also condition on the Brownian \emph{path}, the SDE trajectory is deterministic and therefore it admits a single-step prediction. While the Brownian path is formally infinite-dimensional, we show that practical finite-dimensional parameterizations suffice for accurate one-shot prediction. With a careful extraction of the Brownian information, we can in fact reduce it to 5 dimensions per data dimension.

In this paper, we propose to learn the \emph{It\^o map}: a map that takes as input a state $X_s$ and the realization of Brownian motion $(W_t)_{t\in[0,1]}$, and predicts $X_u$ on the \emph{same path} -- in a single forward pass for any $s<u$. We study the general It\^o map between arbitrary times on the same stochastic path, with the one-shot endpoint case $s=t,u=1$ as the main generative setting of interest. Our method preserves the parts of diffusion that deterministic one-step maps discard: conditional uncertainty, posterior diversity, and the stochastic structure needed for control.

Our \textbf{main contributions} include:\begin{itemize}
    \item We introduce \textbf{It\^o maps} for any-step generative SDE integration, which can be thought of as path-wise stochastic flow maps whose stochasticity is driven by the Brownian motion of the underlying SDE. This gives a principled model of stochastic transition laws between arbitrary times, with the endpoint conditional law $p_{1\mid t}(x_1\mid x_t)$ as the main generative case.
    \item We develop a practical learning framework for It\^o maps using low-dimensional representations of Brownian information, making single-pass stochastic prediction practical in high-dimensional generative modeling.
    \item We derive new optimal control estimators from the It\^o map formulation, including It\^o-G, It\^o-GF, and the BEL-family estimators which rely on Brownian path structure. These estimators are not naturally available in deterministic one-step models or stochastic one-step models that inject noise only at the state level.
    \item We demonstrate the benefits of this formulation on posterior sampling and inference-time steering. On matched benchmarks, It\^o-G is a strong estimator when reward gradients are available, outperforming many baseline estimators, including MFM-G; the gradient-free estimators are promising alternatives with mixed strength depending on the setting.
\end{itemize}\vspace{-1em}
\begin{figure}[H]
  \centering \includegraphics[scale=0.16]{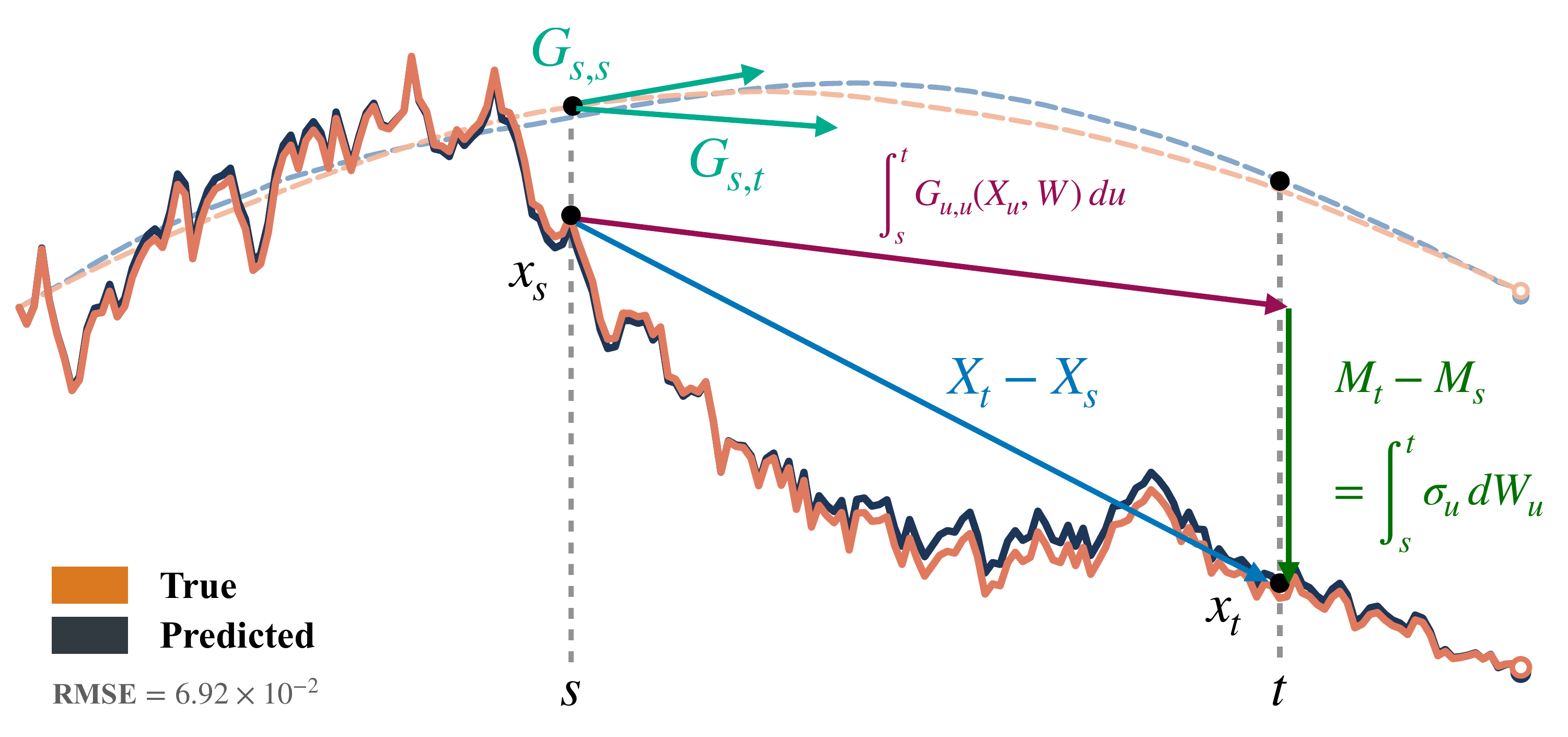}
  \caption{It\^o map trajectory $(\hat{X}_{0,t}(X_0,W))_{t\in[0,1]}$ vs. true SDE roll-out for a 1D Gaussian mixture. We itemize the various terms in \eqref{avgvelo} to distinguish between the uniquely deterministic and stochastic parts of the expression. } \label{1dgmm}
\end{figure}
\subsection{Related Works}
\paragraph{Dynamical transport and distillation.} Recent generative modeling methods describe the evolution from noise to data either through denoising dynamics, as in diffusion and score-based models \citep{ddpm,song2020score}, or through simulation-free transport formulations, as in flow matching, rectified flow, and stochastic interpolants \citep{lipman2023fm,rectifiedflow,albergo23si}. More recently, consistency models and flow-map distillation methods have shown how to compress such dynamics into few-step or one-step samplers \citep{consistency,howtobuild,dmf,tvm}. However, these approaches learn  velocity fields associated with the evolving law, or deterministic fast samplers. In contrast, our goal is to learn a same-path stochastic flow map conditioned on Brownian information, so that the model represents the endpoint conditional law rather than only a point prediction or average transport.

\paragraph{Inference-time steering and stochastic control.}
A related line of work studies inference-time steering toward conditioned or reward-tilted targets. Existing methods broadly fall into two categories: methods that approximate the tilted dynamics using surrogate posteriors or guidance terms, including DPS \citep{dps}, FreeDoM \citep{freedom}, MPGD \citep{mpgd}, Universal Guidance \citep{ugd}, and LGD \citep{lgd}; and particle-based or search-based methods, for which we refer to Uehara et al. \citep{uehara2025ita} for a recent overview. A closely related direction formulates steering through estimation of a value function or its gradient, as in stochastic-control and neural-sampling approaches \citep{vargas2022nsf,akhoundsadegh2024idem}; recent one-step methods such as Meta Flow Maps \citep{mfm} likewise use value-function estimation for reward alignment, but do so on top of fast samplers with exogenous stochasticity. Our work is related in downstream objective, but differs in the object being learned: we learn a same-path stochastic flow map conditioned on Brownian information, and then build control estimators on top of it.
\section{It\^o Maps via Stochastic Interpolants}

In what follows, we first recall the theory of deterministic and stochastic flows to derive theoretically sound objectives for learning the It\^o map. Then we propose a low-dimensional representation of the Brownian path to be injected into the neural network. This leads to an algorithm for training It\^o maps.
\subsection{Background: Stochastic Interpolants and Flow Maps}
\paragraph{Stochastic interpolants.} The common goal in generative modeling is to learn a transport from a noise distribution $ p_0$ to a target data distribution $x_1\sim p_1$. One way to describe this transport is via a time-dependent velocity field $b:\mathbb{R}_t\times\mathbb{R}_x^n\to\mathbb{R}^n$ whose trajectories satisfy the continuity equation:
\begin{align}
\label{eq:ode}
    \dot x_t = b_t(x_t), \quad x_0 \sim p_0, \quad \textrm{and} \quad \partial_t p_t + \nabla \cdot (b_ tp_t) = 0 
\end{align} 
Following stochastic interpolants and flow-matching viewpoints, it is helpful to introduce an auxiliary random process that has the same marginal law $p_t$ as \eqref{eq:ode}, but can be sampled without integrating trajectories. Let $(x_0,x_1)$ be drawn from a joint distribution $\rho(x_0,x_1)$ whose marginals are $p_0$ and $p_1$, and define the linear interpolant \citep{abve_jmlr,lipman2023fm}
\begin{align}
\label{eq:si}
    I_t \coloneqq (1-t)x_0 + t x_1, \quad (x_0, x_1) \sim \rho(x_0, x_1).
\end{align}
By construction, $\text{Law}(I_t) = p_t$ and the velocity field which solves \eqref{eq:ode} can be written as
\begin{align}
\label{eq:bt}
b_t(x) = \mathbb E[\dot I_t | I_t = x] = \mathbb{E}[x_1-x_0\mid I_t=x].\end{align}
This yields the standard regression objective for learning $b_t$: $\mathcal L_{\text{SI}}[ \hat b] = \int_0^1 \mathbb E[ \| \hat b_t(I_t) - \dot I_t \|^2] dt$.
The discussion above gives a deterministic transport viewpoint. Our goal, however, is not only to model this deterministic transport, but also to retain the stochastic structure of the generative process.

\paragraph{Flow Maps.} In the deterministic setting, the evolution from time $s$ to time $ t$ is fully determined by the current state $X_s$, and can therefore be summarized by a flow map. Concretely, one may define a deterministic average velocity $v_{s,t}(X_s)$ through \begin{equation}
    X_t=F_{s,t}(X_s)=X_s+(t-s)v_{s,t}(X_s)
\end{equation}
Recent one-step and few-step samplers \citep{consistency,ctm,shortcut,howtobuild,meanflow,dmf} can be viewed as learning such deterministic
flow maps, or equivalently finite-time average velocities.

\subsection{Any-Step SDEs via Stochastic Flow Maps}
\paragraph{From generative ODE to SDE.} We can add stochasticity to the deterministic transport in~\cref{eq:ode} without changing its time marginals $p_t$. We augment the ODE dynamics with a Langevin term and consider the following SDE~\citep{song2020score,abve_jmlr}:
\begin{equation}
    dX_t = b_t(X_t)\,dt + \frac{\sigma_t^2}{2}\nabla \log p_t(X_t)\,dt + \sigma_t\,dW_t.
    \label{eq:gensde}
\end{equation}

\paragraph{Stochastic flow maps.} For the SDE in~\cref{eq:gensde} and for $s\leq t$, a future state $X_t$ is no longer determined by the current state $X_s$ alone. It also depends on the realized Brownian path over the interval $[s,t]$. This suggests the stochastic analogue of a flow map should take both the current state and the driving Brownian motion as input, and predict the future state on the same stochastic trajectory. We call this object the \textbf{It\^o map} $\Phi_{s,t}$, which maps $(X_s,W)$ to $X_t$ along the same Brownian path $W$. 

Note that $X_t$ can be written as the sum of a finite variation process $G_{s,t}$ and a martingale $M_t\coloneqq \int_0^t\sigma_u\,dW_u$:
\begin{equation}
X_t - X_s
=
\underbrace{
\int_s^t
\left(
b_u(X_u)
+
\frac{\sigma_u^2}{2}\nabla \log p_u(X_u)
\right)\,du
}_{=: ~(t-s)G_{s,t}}
+
\underbrace{
\int_s^t \sigma_u\,dW_u
}_{= ~M_t - M_s}.
\end{equation}
If we want to transport from $X_s$ directly to $X_t$, we need to know both the \textit{average velocity} $G$:
\begin{equation}
    G_{s, t}(X_s, W) \coloneqq \frac{X_t - X_s - M_t + M_s }{t-s}  = \frac{1}{t-s} \int_s^t b_u(X_u)+\frac{\sigma^2_u}{2}\nabla \log p_{u}(X_u) du \label{avgvelo},
\end{equation}
as well as the martingale part $M_t-M_s = \int_s^t \sigma_u dW_u$. However, the latter is a measurable function of the Brownian path $W$, so in practice we only need to learn $G$ and can reconstruct $X$ via \footnote{The equality holds for almost every path $W$, for all $s$ and $t$. The existence of such a map $F$ is a result of the theory of stochastic flows \citep{kunita1990stochastic}.}:
 \begin{equation}\label{eq:ansatz}
   X_t= F_{s,t}(x, W) = x +(t-s) G_{s,t}(x, W) + (M_t-M_s).
\end{equation}
\subsection{Learning Stochastic Flow Maps via Self-Distillation}
In practice, we split the task of learning $G$ into two parts: first learning the diagonal $G_{t,t}$, and then learning the off-diagonal $G_{s, t}$ for $s\neq t$.

\paragraph{Learning the diagonal. } Taking $s\uparrow t$ in Equation \eqref{avgvelo}, we obtain
\begin{equation}
\label{eq:diag}
    G_{t, t}(X_t, W) = b_t(X_t) + \frac{\sigma_t^2}{2}\nabla \log p_{t}(X_t).
\end{equation}
Here $b_t$ can be learned via standard approaches \citep{lipman2023fm,rectifiedflow,albergo23si} and $\nabla \log p_t$ can be reconstructed from that \citep{abve_jmlr}. Alternatively, one can also learn $G_{t,t}$ directly, since it can be written as a conditional expectation:
\begin{equation}
    G_{t,t}(x,W) = \mathbb{E}_{(X_0,X_1)}\left[ X_1-\left(1+\frac{\sigma_t^2}{2(1-t)}\right)X_0 \mid I_t=x\right].\label{eq:sdeinterpdrift}
\end{equation}
 We leave the proof to Appendix \ref{proofdiag}. Note that $G_{t,t}$ is clearly independent of $W$, so we may as well denote it by $G_{t,t}(x)$.
For training stability, we remove the singularity at $t=1$ by setting $\sigma_t=\sqrt{2(1-t)}$. This in particular yields \begin{equation}
    G_{t,t}(x,W) = \mathbb{E}[X_1-2X_0\mid I_t=x].
\end{equation} Therefore we can learn $G_{t, t}(\cdot, \cdot)$ via 
\begin{equation}
    \mathcal{L}_{\text{SI}}(\hat{G}) = \int_0^1\mathbb{E} \|\hat{G}_{u,u}(I_u) - (X_1-2X_0)\|^2\,du.
\end{equation}
To learn a (stochastic) flow map, one also needs off-diagonal training (i.e. training $G_{s,t}$ for $s\neq t$) to enforce consistency across time intervals in addition to the diagonal training of $G_{t,t}$.
\paragraph{Learning the off-diagonal via Lagrangian self-distillation (LSD).} The Lagrangian perspective \citep{lagrangian, howtobuild} asks how the process $F_{s,t}(X_s, W)$ evolves with the terminal time $t$, and matches this evolution to the desired dynamics $X_t = \Phi_{s,t}(X_s, W)$. Recall that the average velocity underlying the It\^o map $G_{s,t}(x,W)$ can be written as \eqref{avgvelo}. Differentiating \eqref{avgvelo} with respect to $t$ yields:
\begin{equation}
    b_t(F_{s,t}(x,W))+\frac{\sigma_t^2}{2}\nabla \log p_t(F_{s,t}(x,W))=G_{s,t}(x,W)+(t-s)\partial_tG_{s,t}(x,W)
\end{equation}
which must hold for all $s\leq t$. For $s\uparrow t$, it recovers the diagonal condition. This identity motivates an off-diagonal training objective: we enforce that the diagonal drift predicted at time $t$ matches the RHS computed from $G_{s,t}$ and its $t$-derivative. Concretely, we define the LSD consistency loss
\begin{equation}
\label{eq:lsd_pinn}
    \mathcal{L}_{\text{LSD}}(F) = \int_{s \leq t} \mathbb E  [\|G_{t,t}(F_{s,t}(X_s,W), W) - G_{s,t}(X_s, W) - (t-s) \partial_t G_{s,t}(X_s, W) \|^2]dsdt.
\end{equation}
In addition, the semigroup law of flows gives rise to the progressive self-distillation loss:
\begin{equation}
    \mathcal{L}_{\text{PSD}}(F) = \int_{s \leq u \leq t} \mathbb E \left[\left \|F_{s, t}( X_s, W) - F_{u, t}(F_{s, u}(X_s, W), W)\right \|^2\right ]dsdudt \label{lsd_obj}
\end{equation}
We leave the detailed explanations and the proof that LSD and PSD are both valid off-diagonal objectives for It\^o map learning to the appendix. Readers may also be familiar with Eulerian-type objectives, including Mean Flow; we show in Appendix \ref{esd} that such objectives \textit{cannot} be used to train It\^o maps.
\subsection{Extracting Brownian Information}\label{KL}
\paragraph{Global Karhunen–Loève Expansion.} Brownian sample paths can be viewed as elements of the infinite-dimensional path space $L^2([0,T],\mathbb{R}^d)$. A direct discretization of this path space with $N$ time points would represent a $d$-dimensional Brownian path using $dN$ inputs, which creates an unnecessarily large conditioning space and can hinder learning of the state dependence. Therefore, we need to use a lower dimensional representation of the Brownian information. A result by \citep{kosambi,karhunen, loeve} states that for a standard Brownian trajectory $W_t$ on $[0,T]$, $W_t$ admits an expansion \begin{equation}
        W_t= \sum_{n=1}^{\infty} \frac{T}{(n-\frac{1}{2})\pi}\,\xi_n\,\sqrt{\frac{2}{T}}\sin\left(\frac{(n-\frac{1}{2})\pi t}{T}\right)
    \end{equation}
    where the KL coefficients $\xi_n$ are i.i.d. standard Gaussian random variables. Note that as $n$ increases, the amplitude of the $n$-th term decreases. Thus, the first few KL coefficients already capture most of the Brownian information and can be used to form a low-dimensional representation of the Brownian path. Up to a certain number of modes $K$, the following vector represents the Brownian motion:
    \begin{equation}
        \Psi^{\text{KL}}(W_{[0,1]})=[\xi_1,\, \xi_2,\,\ldots,\, \xi_K].
    \end{equation}
    See Appendix \ref{klthm} for proof of the KL theorem. In this paper, we choose \textbf{$K=5$} KL modes by default. In our MNIST and ImageNet runs, increasing the number of modes did not yield a clear qualitative improvement in generation. 
\paragraph{Local Dyadic Wavelets.} We may also represent the stochastic input on $[s,t]$ through dyadic features of the reweighted Brownian path $M_t=\int_0^t\sigma_u\,dW_u$, equivalently through a finite Haar-wavelet parametrization of the driving white noise. Let 
\begin{equation}
    u_{i,j}=s+j2^{-i}(t-s) \quad\text{for}\;j=0,\ldots,2^i.
\end{equation}
At level $i$, the dyadic coordinates encode the projection of the reweighted Brownian path onto the span of Haar functions up to that scale. Thus by letting $Z_{0,0}=M_t-M_s$ and
\begin{equation}
    Z_{i,j}
\coloneqq 
M_{u_{i,2j+1}}-M_{u_{i-1,j}} \quad \text{for }
i\ge 1,\quad j=0,\ldots,2^{i-1}-1 ,
\end{equation}
the collection
\begin{equation}
\Psi^{\mathrm{dyadic}}(M_{[s,t]}) = \Big[ \underbrace{Z_{0,0}}_{\text{level }0},\; \underbrace{Z_{1,0}}_{\text{level }1},\; \underbrace{Z_{2,0},\,Z_{2,1}}_{\text{level }2},\; \ldots\Big]
\end{equation}
provides a multiresolution description of the reweighted Brownian path on $[s,t]$: the coarse-scale displacement is given first, and finer dyadic fluctuations are added progressively. In fact, we apply this construction component-wise to the reweighted Brownian path, giving the neural network a structured finite-dimensional representation of the Brownian driver based on its Haar/Schauder expansion.
\[
\begin{tikzpicture}[
    x=8cm,
    y=1cm,
    font=\small,
    tick/.style={black, thick},
    base/.style={black, thick},
    zzero/.style={
        decorate,
        decoration={brace, amplitude=5pt},
        thick,
        ItoAccent
    },
    zone/.style={
        decorate,
        decoration={brace, amplitude=4pt},
        thick,
        ItoTeal
    },
    ztwo/.style={
        decorate,
        decoration={brace, mirror, amplitude=4pt},
        thick,
        ItoRust
    }
]
    \coordinate (s)  at (0,0);
    \coordinate (u21) at (0.25,0);
    \coordinate (u11) at (0.50,0);
    \coordinate (u23) at (0.75,0);
    \coordinate (t)  at (1,0);

    \draw[base] (s) -- (t);

    \foreach \p/\lab in {
        s/$s$,
        u21/$u_{2,1}$,
        u11/$u_{1,1}$,
        u23/$u_{2,3}$,
        t/$t$
    }{
        \draw[tick] ($(\p)+(0,0.05)$) -- ($(\p)+(0,-0.05)$);
        \node[below=2pt] at (\p) {\lab};
    }

    \draw[zzero]
        ($(s)+(0,0.6)$) -- node[above=5pt] {$Z_{0,0}$} ($(t)+(0,0.6)$);

    \draw[zone]
        ($(s)+(0,0.1)$) -- node[above=1.5pt] {$Z_{1,0}$} ($(u11)+(0,0.1)$);

    \draw[ztwo]
        ($(s)+(0,-0.45)$) -- node[below=5pt] {$Z_{2,0}$} ($(u21)+(0,-0.45)$);

    \draw[ztwo]
        ($(u11)+(0,-0.45)$) -- node[below=5pt] {$Z_{2,1}$} ($(u23)+(0,-0.45)$);
\end{tikzpicture}
\]
This parameterization has two practical advantages. First, linear interpolation is easy for the model to understand and avoids the global sinusoidal structure of the KL features;
second, the dyadic coordinates refine the Brownian path on $[s,t]$ level by level, and the magnitude of $Z_{i,j}$ is meaningful and decreases with the level $i$ on average. 

In our experiments, we retain both Brownian parameterizations. The It\^o map can be trained with either input $W$ in $\widehat G_{s,t}(x,W)$ being $\Psi^{\mathrm{KL}}(W_{[0,1]})$ or $\Psi^{\mathrm{dyadic}}(M_{[s,t]})$ depending on whether we use global or local Brownian features. An overview of the training paradigm for the It\^o map using LSD as an example is given below:

\algnewcommand\Input{\item[\textbf{Input:}]}
\algnewcommand\Output{\item[\textbf{Output:}]}
\begin{center}
\begin{minipage}{0.8\linewidth} 
  \begin{algorithm}[H]
\caption{Training from scratch with Lagrangian self-distillation.}
\label{alg:LSD}

\begin{algorithmic}[1]
  \Input $p_0 = p_{\text{noise}},\; p_1 = p_{\text{data}}$; model $\hat{G}_{s,t}(x,W)$
  \Repeat
    \State \textbf{Compute diagonal loss:}
    \State Sample $t \sim U[0,1]$; simulate $X_0 \sim p_0$ and $X_1 \sim p_{\text{data}}$
    \State $I_t \gets tX_1 + (1-t)X_0$
    \State $\mathcal{L}_{\mathrm{SI}} \gets \|\hat{G}_{t,t}(I_t,\mathrm{zeros}) - (X_1 - 2X_0)\|_2^2$

    \State \textbf{Compute consistency loss:}
    \State Sample $(s,t)\sim U[0,1]^2$ and reorder so that $s<t$; sample $X_0\sim p_0$, $X_1\sim p_{\text{data}}$
    \State Simulate Brownian trajectory $(M_t)_{t\in[0,1]}$ and extract Brownian features $W$
    \State $I_s \gets sX_1 + (1-s)X_0$
    \State $X_t \gets I_s + (t-s)\hat{G}_{s,t}(I_s,W) + M_t - M_s$ for $M_t - M_s = \int_s^t \sigma_u\,dW_u$
    \State $\mathcal{L}_{\mathrm{LSD}} \gets
      \bigl\|\hat{G}_{s,t}(I_s,W)
      +(t-s) \partial_t \hat{G}_{s,t}(I_s,W)
      - \operatorname{sg}\bigl(\hat{G}_{t,t}(X_t,W)\bigr)\bigr\|_2^2$
    \State $\mathrm{loss} \gets \mathcal{L}_{\mathrm{SI}} + \lambda \mathcal{L}_{\mathrm{LSD}}$
    \State Update model by taking one optimizer step
  \Until{convergence}
  \Output Trained It\^o map $ \hat{X}_{s,t}(x,W) = x + (t-s)\hat{G}_{s,t}(x,W) + M_t - M_s$
    \end{algorithmic}
\end{algorithm}
\end{minipage}
\end{center}

\section{Reward Alignment}
\subsection{Doob's $h$-Transform and Steering Algorithm}
A natural downstream application of It\^o maps is inference-time steering. Given a reward function $r:\mathbb{R}^d\to\mathbb{R}$ defined on the ambient space containing the data manifold, our goal is to steer the terminal samples toward high-reward regions while preserving the stochastic structure of the sampler. The \emph{reward-tilted distribution} is defined as
\begin{equation}
    p_{\text{reward}}(x)\propto p_{\text{model}}(x) e^{r(x)} \label{eq:reward}
\end{equation}
where $p_{\text{model}}$ is the terminal distribution of our It\^o map generation, approximating $p_{\text{data}}$. In stochastic optimal control, this problem is governed by the \emph{value function} $V_t(x)\coloneqq\log\mathbb{E}[\exp{r(X_1)\mid X_t=x}]$. By Doob’s $h$-transform \citep{doob1957,daipra}, the required drift correction to (\ref{eq:gensde}) is precisely the \textit{optimal control} term $\nabla V_t(x)$, yielding the controlled SDE:
\begin{equation}
    dX_t^*= \left(b_t(X_t^*)+\frac{\sigma_t^2}{2}\nabla \log p_t(X_t^*)+\sigma_t^2 \nabla V_t(X_t^*)\right)dt + \sigma_tdW_t. \label{controlsde}
\end{equation}
Under those new dynamics of $(X_t^*)_{t\in [0,1]}$, the marginal laws are given by \begin{equation}
    p_t^*(x)\propto p_t(x)e^{V_t(x)} \label{tiltmarg}
\end{equation}
If we cut the time interval $[0,1]$ into $N$ pieces, $\{0,\frac{1}{N},...,\frac{N-1}{N},1\}$, define $t_i=\frac{i}{N}$ and denote by $\hat{v}_{t}(x)$ the estimator for $\nabla V_t(x)$, steering is performed by rolling out the controlled dynamics using the learned It\^o map along a fixed Brownian trajectory:

{\centering
\begin{minipage}{0.7\linewidth}
\begin{algorithm}[H]
\caption{Inference-time steering roll-out.}
\label{alg:steering-roll-out}
\begin{algorithmic}[1]
  \Input Fix one $x_0 \sim p_0$ and one Brownian trajectory $(W_t)_{t\in[0,1]}$;
  \Statex \hspace{1.8em}trained It\^o map $\hat{G}_{s,t}(x,W)$ and optimal control estimator $\hat{v}_t(x)$
  \For{$i = 0,1,\dots,N-1$}
    \State $x_{t_{i+1}} \gets x_{t_i}
    + \Delta t \Bigl[\hat{G}_{t_i,t_i}(x_{t_i},W)
    + \sigma_{t_i}^2 \hat{v}_{t_i}(x_{t_i})\Bigr]
    + M_{t_{i+1}} - M_{t_i}$
  \EndFor
  \Output $x_1$
\end{algorithmic}
\end{algorithm}
\end{minipage}
\par}
In practice, we may need to rescale the reward function for better steering performance. For $\lambda\in\mathbb{R}_{>0}$, the rescaled tilted distribution is given by $p_1^*(x)\propto p_{\text{model}}e^{\lambda\cdot r(x)}$.
\subsection{Estimators of $\nabla V_t(x)$}
The main practical problem in this section is therefore to estimate $\nabla V_t(x)$. Because It\^o maps provide conditional endpoint samples from fixed intermediate states and retain Brownian path information, they support both gradient-based and gradient-free estimators of this control. When the gradient of the reward function is available, the identity \begin{equation}
    \nabla_xV_t(x)=\nabla_x\log\mathbb{E}[e^{r(X_1)}\mid X_t=x] \end{equation} leads directly to the following Monte Carlo estimator.
\begin{defn}
At time $t_i$ and state $x_i$, the It\^o-G estimator (\textit{G} stands for gradient) for $\nabla V_t(x)$ is
\begin{equation}
    \nabla\log \frac{1}{Z}\sum_{j=1}^Z\exp(r(\hat{X}(t_i,1,x_i,W^j)))\vspace{-1em}
\end{equation}
where $W^j$ are the Brownian trajectories that are newly simulated on $[t_i,1]$.
\end{defn}
It\^o-G is simple and uses only conditional endpoint samples, but requires access to reward gradients. When the gradient of the terminal weight ($\exp\circ\,r$) is not available or is poorly behaved (e.g. spiky, non-differentiable, or even singular), we propose the following gradient-free estimators and postpone the proofs to the appendix.\nopagebreak
\begin{defn} At time $t_i$ and state $x_i$, the It\^o-GF estimator for $\nabla V_{t}(x)$ is
\begin{equation}
    \frac{2}{\sigma_{t_i}^2}\frac{1}{1-t_i}\cdot\left( \frac{\sum_{j=1}^Z\hat{X}(t_i,1,x_i,W^j)\exp(r(\hat{X}(t_i,1,x_i,W^j)))}{\sum_{j=1}^Z\exp (r(\hat{X}(t_i,1,x_i,W^j)))}-\frac{1}{Z}\sum_{j=1}^Z\hat{X}(t_i,1,x_i,W^j)\right)
\end{equation}
where $W^j$ are the Brownian trajectories that are newly simulated on $[t_i,1]$.
\end{defn}
It\^o-GF avoids differentiating through the reward by comparing reward-weighted and unweighted endpoint expectations. A second gradient-free route is to exploit the Brownian path structure directly. This leads to a representation of the optimal control in terms of pathwise Jacobians and Brownian increments. BEL-family estimators can also deal with singular terminal weights.
\begin{theorem} 
    [Generalized BEL formula, \citep{BEL}] Let $X_t$ be the uncontrolled dynamics with Brownian motion W (with path measure $\mathbb{P}$). Let $s\in[0,T)$ and $J_{t\mid s}\coloneqq \nabla_{X_s}X_t$ be the pathwise Jacobian for any $t\in (s,T)$. Let $F=\exp\circ\, r$ be the terminal weight and $\mathbb{Q}$ be the terminally tilted path measure. Then for any weighting $\alpha_{\cdot\mid s}$ defined on $[s,T]$ with $
        \int_s^T\alpha_{u\mid s}\,du=1
    $, the optimal control satisfies:\begin{equation}
        \mathcal{V}_s(x)=\nabla_x\log\mathbb{E}_{\mathbb{P}}[F(X_T)\mid X_s=x]=\mathbb{E}_{\mathbb{Q}}\left[\int_s^T\alpha_{t\mid s}J^{\top}_{t\mid s}\sigma_t^{-1}dW_t \mid X_s=x\right]
    \end{equation} where $\sigma_t$ is our diffusion coefficient. \label{bel}
\end{theorem}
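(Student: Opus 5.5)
We prove the generalized BEL formula by combining the Bismut–Elworthy–Li integration-by-parts identity with a change of measure from $\mathbb{P}$ to $\mathbb{Q}$. Throughout we assume enough regularity that the stochastic flow $x\mapsto X_t$ is differentiable, with $J_{t\mid s}$ solving the variational equation $dJ_{t\mid s}=\nabla \mu_t(X_t)J_{t\mid s}\,dt$, $J_{s\mid s}=I$. Here $\mu_t=b_t+\frac{\sigma_t^2}{2}\nabla\log p_t$ is the uncontrolled drift. Since $\sigma_t$ is scalar and state-independent, the noise term contributes nothing to the Jacobian. We also assume $\sigma_t>0$ on $[s,T)$, so that $\sigma_t^{-1}$ is well defined, and that $F$ is bounded and positive. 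Singular $F$ is handled at the end by approximation.

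\textbf{Step 1: the BEL identity under $\mathbb{P}$.} Write $u(t,y)=\mathbb{E}_{\mathbb{P}}[F(X_T)\mid X_t=y]$. By the Markov property, $t\mapsto u(t,X_t)$ is a $\mathbb{P}$-martingale on $[s,T]$. By Itô's formula, using the backward Kolmogorov equation to cancel the drift terms, we get
\[
u(t,X_t)=u(s,x)+\int_s^t \sigma_r\nabla u(r,X_r)^\top dW_r .
\]
The key pathwise fact is that $\nabla u(t,X_t)^\top J_{t\mid s}$ is also a martingale. This holds because $\nabla_x\,\mathbb{E}[F(X_T)\mid X_s=x]=\mathbb{E}[\nabla u(t,X_t)^\top J_{t\mid s}]$ for every $t$, by the chain rule and the flow property $X_T=\Phi_{t,T}(X_t)$. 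Taking conditional expectations gives $\nabla u(s,x)^\top=\mathbb{E}[\nabla u(t,X_t)^\top J_{t\mid s}]$ for each $t$. Now consider
\[
\mathbb{E}\Big[F(X_T)\int_s^T\alpha_{t\mid s}J_{t\mid s}^\top\sigma_t^{-1}dW_t\Big].
\]
We replace $F(X_T)$ by $u(T,X_T)$ and apply Itô isometry to the product of the two stochastic integrals. This gives $\mathbb{E}\int_s^T\alpha_{t\mid s}J_{t\mid s}^\top\nabla u(t,X_t)\,dt$. Applying the martingale property pointwise in $t$ turns this into $\int_s^T\alpha_{t\mid s}\,dt\;\nabla u(s,x)=\nabla u(s,x)$, using the normalization $\int_s^T\alpha_{t\mid s}\,dt=1$. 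The constant $u(s,x)$ term drops out because the stochastic integral has mean zero. For this we need $\alpha$ deterministic, or at least adapted with enough integrability.

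\textbf{Step 2: change of measure.} The terminally tilted measure is $d\mathbb{Q}/d\mathbb{P}=F(X_T)/\mathbb{E}_{\mathbb{P}}[F(X_T)\mid X_s=x]$, conditional on $X_s=x$. Dividing the identity from Step 1 by $u(s,x)$ gives
\[
\nabla_x\log u(s,x)=\mathbb{E}_{\mathbb{Q}}\Big[\int_s^T\alpha_{t\mid s}J^\top_{t\mid s}\sigma_t^{-1}dW_t\,\Big|\,X_s=x\Big].
\]
Note that $W$ is still the $\mathbb{P}$-Brownian motion here, which is the form stated in Theorem~\ref{bel}. This is the claim.

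\textbf{Main obstacle.} The delicate step is justifying the martingale property of $\nabla u(t,X_t)^\top J_{t\mid s}$ and the interchanges of derivative and expectation. These require $u$ to be $C^{1}$ in space with integrable gradient on $[s,T)$, which holds once $F$ is bounded, by smoothing of the semigroup for $t<T$. We also need integrability of $\int\alpha\,J^\top\sigma^{-1}dW$, which in turn needs $\sigma_t^{-1}$ square integrable against $\alpha$. Near $T=1$ with $\sigma_t=\sqrt{2(1-t)}$, the factor $\sigma_t^{-1}$ blows up, so we would take $\alpha$ supported on $[s,T-\epsilon]$, or require $\int\alpha_t^2\sigma_t^{-2}\,dt<\infty$. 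For singular or non-smooth $F$, we first prove the identity for smooth bounded approximants $F_n\to F$. The right-hand side involves no derivative of $F$, so we can pass to the limit by dominated convergence in the $\mathbb{P}$-expectation before normalizing.
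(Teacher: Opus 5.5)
Your proposal is correct and follows the same route as the paper: establish the $\mathbb{P}$-identity $\nabla_x \mathbb{E}_{\mathbb{P}}[F(X_T)\mid X_s=x]=\mathbb{E}_{\mathbb{P}}[F(X_T)\int_s^T\alpha_{t\mid s}J_{t\mid s}^{\top}\sigma_t^{-1}dW_t\mid X_s=x]$, then divide by $\mathbb{E}_{\mathbb{P}}[F(X_T)\mid X_s=x]$ to rewrite the ratio as a $\mathbb{Q}$-expectation. The only difference is that the paper cites this $\mathbb{P}$-identity (Theorem D.1 of the BEL reference), whereas your Step 1 derives it in the standard way, via the martingale representation of $u(t,X_t)$, It\^o isometry, and the constant expectation of $J_{t\mid s}^{\top}\nabla u(t,X_t)$.
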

In our setup, $T=1$, yielding a family of estimators indexed by the weighting function $\alpha_{t\mid s}$:
\begin{defn}
    At time $t_i$ and state $x_i$, the BEL family estimators for $\nabla V_t(x)$ are:
    \begin{equation}
        \frac{ \sum_{j=1}^Z\exp{(r(\hat{X}(t_i,1,x_{i},W^j)))}\cdot\sum_{k=i+1}^N J_{t_k\mid t_i}^{\top}\Delta W_{t_k:t_{k+1}}\frac{\alpha_{t_k\mid t_i}}{\sigma_{t_k}}}{\sum_{j=1}^Z\exp{(r(\hat{X}(t_i,1,x_{i},W^j)))}}
    \end{equation}
\end{defn}
where $(\alpha_{t\mid t_i})_{ti\leq t\leq 1}$ is any weight with $\int_{t_i}^1\alpha_{u\mid s}\,du=1$. We choose $\alpha_{t\mid s}\propto\sigma_t$ as the default BEL estimator. A particularly cheap choice is to have $\alpha_t$ only supported on the first time interval after $t_i$:
\begin{defn}
    [BEL-First / BEL-I] If we set $\alpha_{t_i} = \frac{1}{\Delta t} \mathds{1}_{[t_i,t_{i+1}]}$, then we get the BEL-I estimator:\begin{equation}
        \frac{ \sum_{j=1}^Z\exp{(r(\hat{X}(t_i,1,x_{i},W^j)))}\cdot \Delta W_{t_i:t_{i+1}}\frac{\mathds{1}_{[t_i,t_{i+1}]}}{\Delta t \cdot \sqrt{2(1-t_i)}}}{\sum_{j=1}^Z\exp{(r(\hat{X}(t_i,1,x_{i},W^j)))}}
    \end{equation}
\end{defn}
In summary, It\^o-G is the simplest option when reward gradients are available; It\^o-GF removes reward gradient dependence using conditional endpoint averages; and the BEL family further exploits Brownian path structure to obtain pathwise gradient-free estimators.

\section{Experiments}
We evaluate It\^o maps on two tasks: same-path stochastic prediction and inference-time steering. Same-path prediction tests whether the learned map reproduces SDE transitions under matched Brownian trajectories, while steering tests whether its conditional endpoint samples can estimate control directions for reward-tilted generation. We start with low-dimensional Gaussian mixtures and then move to MNIST and ImageNet.

\subsection{1D Gaussian Mixture Model}
We first test the basic same-path prediction property in a one-dimensional Gaussian mixture setting. Starting from a fixed initial state $X_0$ and a fixed Brownian path $W_{[0,1]}$, the drift term is available in closed form. We compare the true SDE roll-out $(X_t)_{t\in [0,1]}$ with the trajectory obtained by one-shot It\^o map predictions from time 0 to any time $t$: \begin{equation}
    (\hat{X}(0,t,X_0,W))_{t\in[0,1]}
\end{equation}
As shown in Figure \ref{1dgmm}, the predicted trajectory closely follows the true SDE trajectory. Equivalently, the learned average velocity satisfies $X_0+t\,\hat{G}(0,t,x_0,W) = X_t-M_t$. This experiment serves as a sanity check that the model correctly interprets KL coefficients as Brownian information and reproduces same-path SDE trajectories.

\subsection{2D Gaussian Mixture Model for Reward Alignment}
We use a 2D inverse problem with known posterior to evaluate steering accuracy. Take a Gaussian mixture prior $X$ and make a noisy linear observation $Y$, i.e. $p(y\mid x)=\mathcal{N}(y; Ax,\sigma^2I)$ for a fixed linear map $A:\mathbb{R}^2\to\mathbb{R}$. We steer towards the posterior distribution $p(x\mid y)$, which can be analytically computed for given $A,\sigma$ and $p_x$. This setting provides an exact target distribution, allowing us to compare steering quality quantitatively rather than only visually. In our experiment, we set $X\sim \frac{1}{3}\mathcal{N}((-3,-3), 0.5^2I)+ \frac{1}{3}\mathcal{N}((0,0), 0.5^2I)+\frac{1}{3}\mathcal{N}((3,3), 0.5^2I)$ with $A=[1.2,-0.8]$ and $\sigma=0.2$ and assume that the observation $y=-1.0$. This setup exactly aligns with Meta Flow Maps so that we can directly compare with the reported MFM baselines. We produce 4096 posterior samples, and report the distance of the steered samples from the true posterior distribution using Sliced-Wasserstein distance S-W2 and Maximum Mean Discrepancy (MMD) as metrics.
\begin{figure}[H]
    \centering
    \begin{minipage}[c]{0.6\linewidth}
        \includegraphics[width=\linewidth]{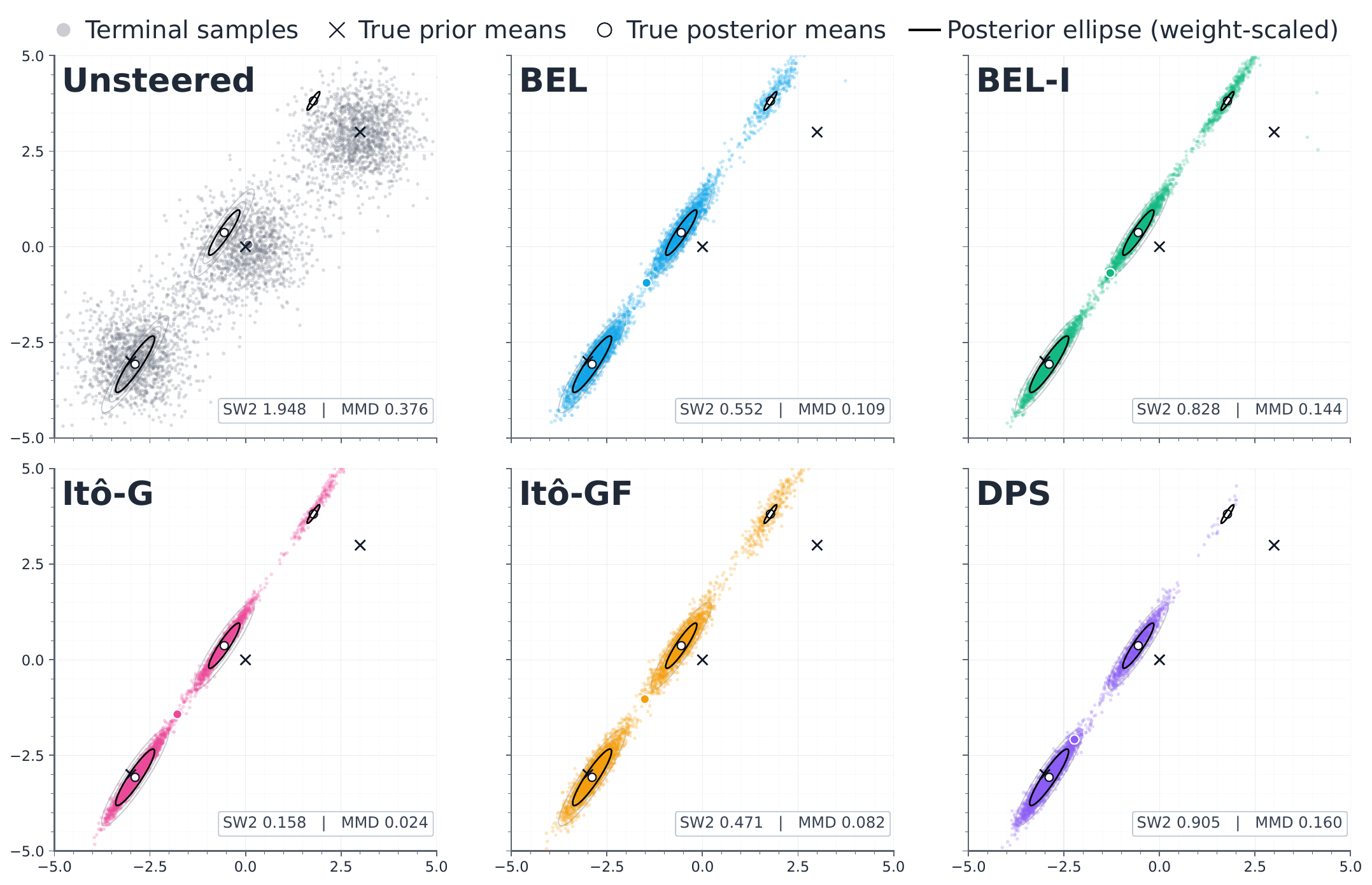}
    \end{minipage}\hfill
    \begin{minipage}[c]{0.4\linewidth}
        \centering
        \renewcommand{\arraystretch}{1}
        \begin{tabular}{lll}
            \toprule
            Estimator & S-W2 & MMD \\
            \midrule
            Unsteered & 1.95  & 0.38  \\
            It\^o-G & \textbf{0.16}  &  0.024 \\
            It\^o-GF &  0.47 &  0.082 \\
            BEL &  0.55 &  0.11 \\
            BEL-I &  0.83 &  0.14 \\
            DPS & 0.91  &  0.16 \\
            MFM-G & 0.51  & 0.027 \\
            MFM-GF & 0.29 &0.0064 \\
            MFM-FT & 0.20 & 0.017 \\
            \bottomrule
        \end{tabular} 
    \end{minipage}
    \caption{2-D GMM steering towards the posterior distribution $p(x \mid y)$. \textbf{Left}: comparison of the terminal states of the 4096 steered particles with different estimators using 128 Monte Carlo samples. \textbf{Right}: a comparison of metrics against MFM and DPS, using 128 MC samples. Reward scale = 1.}
    \label{2dgmm}
\end{figure}
\begin{figure}[H]
    \centering
    \includegraphics[width=0.5\linewidth]{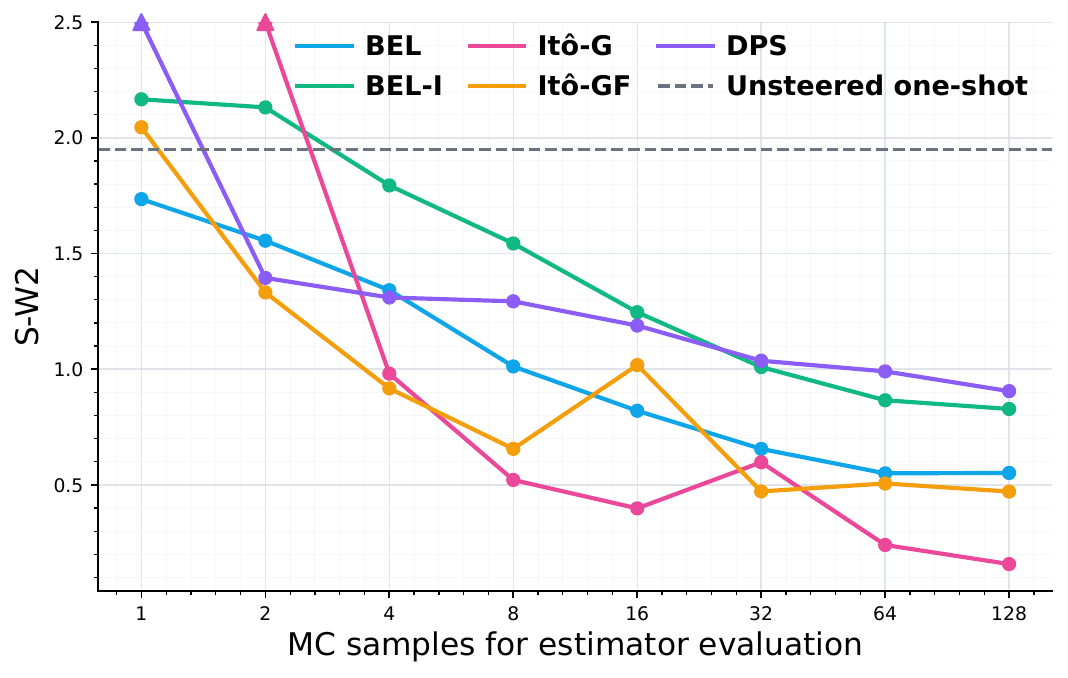}
    \caption{S-W2 from the true posterior against number of MC samples.}
    \label{fig:sw2}
\end{figure}
\normalsize Figures \ref{2dgmm} and \ref{fig:sw2} show that the It\^o map control estimators substantially improve tilted posterior sampling over the unsteered roll-out. When reward gradients are available, It\^o-G achieves the best S-W2 among the compared methods, with S-W2 0.16 and MMD 0.024 using 128 MC samples. This improves over MFM-G on both metrics, while MFM-GF obtains the lowest MMD. The gradient-free It\^o-GF, BEL, BEL-I estimators also consistently outperform the DPS-guided roll-out \citep{dps}.
\subsection{MNIST Same-Path Prediction}
\begin{figure}[H]
    \centering
    \includegraphics[width=0.5\linewidth]{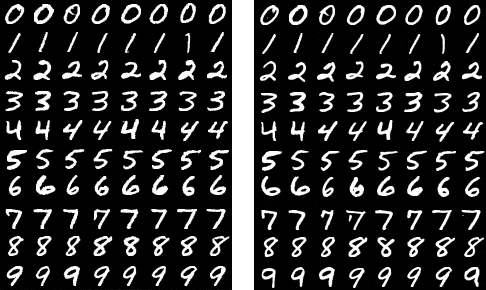}
    \caption{Same-path prediction on MNIST. \textbf{Left}: SDE roll-outs from a fixed $x_0$ under eight Brownian trajectories. Right: four-step It\^o map predictions using the same Brownian trajectories.}
    \label{mnpath}
\end{figure}
On MNIST, we test whether the It\^o map trained in pixel space can reproduce SDE transitions under matched Brownian trajectories. Figure \ref{mnpath} compares SDE roll-outs with four-step It\^o map predictions from the same initial noise and Brownian paths. We used a depth-4 dyadic representation when training the It\^o map. The pixel-space MSE is 0.05 on the $[-1,1]$ scale, corresponding to 0.0125 after rescaling pixels to $[0,1]$. This supports the central premise of the It\^o map parameterization: conditioned on the same Brownian path, the learned any-step map approximates the corresponding stochastic roll-out.
\subsection{MNIST Reward Alignment}\label{MNinfsteer}
We next evaluate inference-time steering on MNIST, where the goal is to steer generation toward a prescribed class mixture. We define the reward function as
\begin{equation}
    r(x) \coloneqq \log \sum_{i=0}^9 w_i\,p_{\theta}(c_i\mid x)\end{equation}
where $p_{\theta}(c_i\mid x)$ is the pretrained classifier probability that $x$ belongs to class $i$ and weights $w_i$ specifying the desired class proportions satisfy $\sum w_i=1$. Under this choice, the reward-tilted target corresponds to the class-mixture posterior $\sum_{i=0}^9w_i \,p(x\mid c_i)$, so success can be measured by how closely the generated class histogram matches the target weights.
\begin{figure}[H]
    \centering
    \begin{minipage}[c]{0.5\linewidth}
        \includegraphics[width=\linewidth]{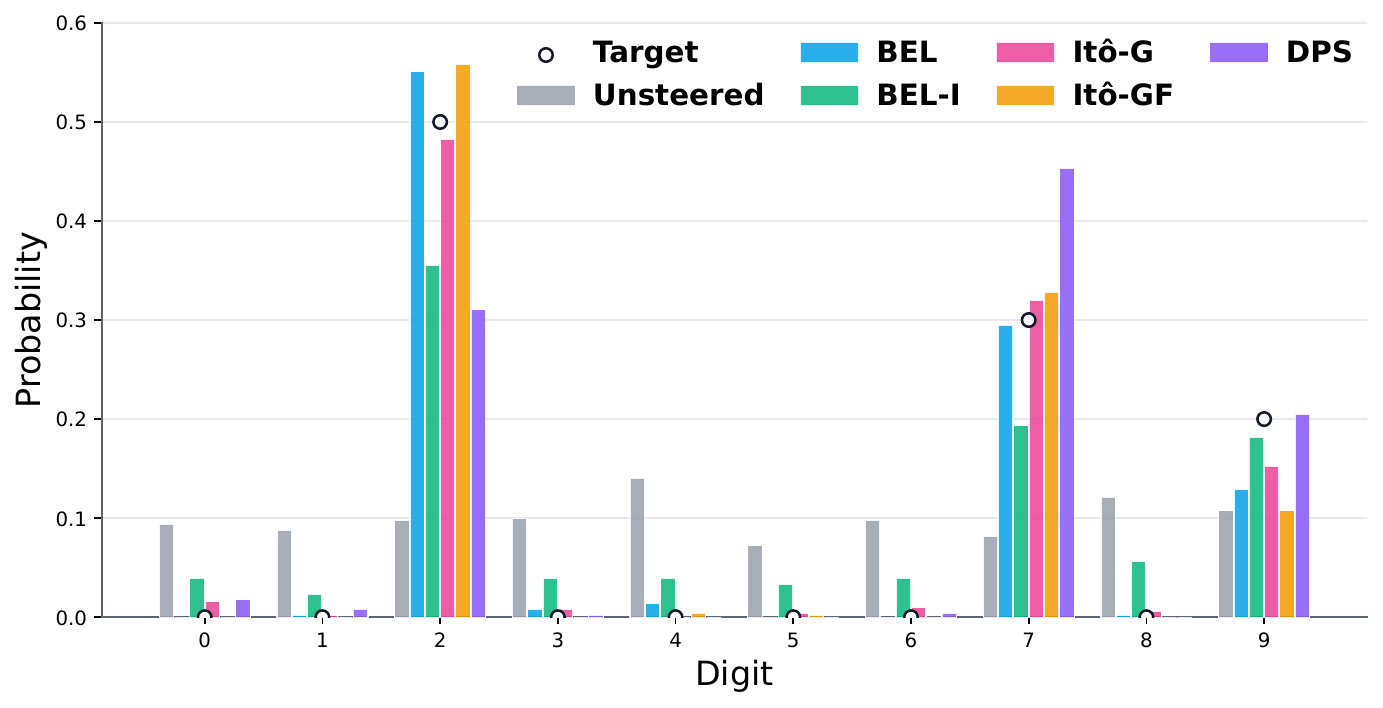}
    \end{minipage}\hfill
    \begin{minipage}[c]{0.45\linewidth}
        \centering\includegraphics[width=\linewidth]{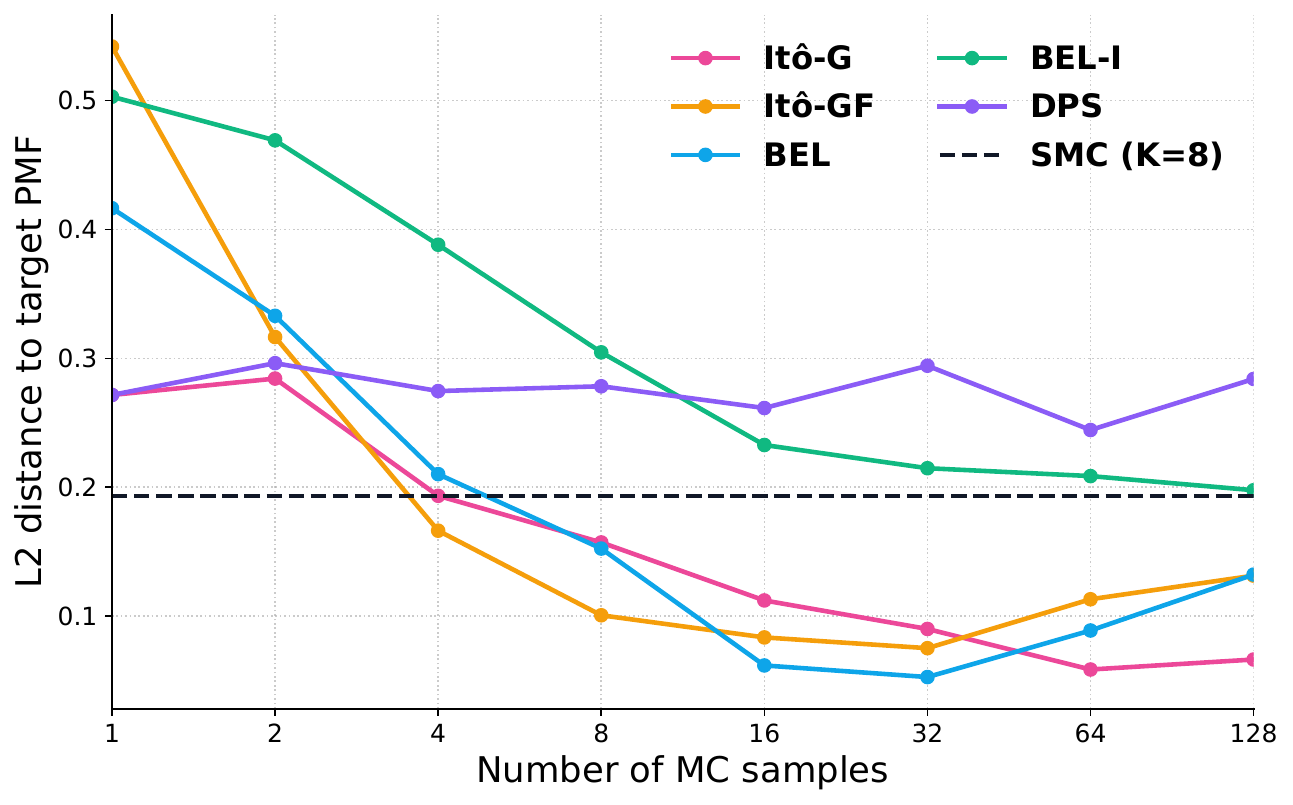}
        \renewcommand{\arraystretch}{1}
    \end{minipage}\vspace{-1em}
    \caption{MNIST reward alignment toward a target mixture supported on digits 2, 7, 9 with weights $(w_2,w_7,w_9)=(0.5,0.3,0.2)$. \textbf{Left}: terminal classes of the 512 tilted generations with different estimators, using 64 Monte Carlo samples. \textbf{Right}: 
    $L^2$(steered class frequency, target weights) against the number of MC samples. }  
    \label{mnist}
\end{figure}
Figure \ref{mnist} shows that the It\^o map estimators match the target mixture more accurately than the unsteered roll-out and DPS. In particular, DPS overshoots digit 7 and undershoots digit 2, whereas It\^o-G, It\^o-GF, and BEL produce class proportions closer to the desired weights. Quantitatively, the It\^o map estimators also achieve lower $L^2 $ error to the target PMF than the SMC baseline given enough MC samples. This experiment shows that the It\^o map control estimators remain effective beyond analytic toy problems, in an image setting where the target is specified through a downstream classifier.
\subsection{ImageNet-256}
For ImageNet-256, we use an SiT-XL backbone \citep{sit} and evaluate It\^o maps in two settings. First, we use a self-distilled It\^o map to test conditional diversity and same-path fidelity from fixed intermediate states; second, we use an It\^o map trained with a pretrained Decoupled MeanFlow model \citep{dmf} as the diagonal teacher for reward steering.
\paragraph{Posterior diversity and same-path fidelity.} Fix a time $t$ and a state $X_t$. We simulate multiple i.i.d. Brownian trajectories $W^j$ on $[t,1]$ and use our It\^o map trained from scratch to predict the final states $\hat{X}_{t,1}(X_t,W^j)$. The It\^o map should produce diverse endpoints while remaining aligned with the corresponding SDE roll-outs driven by the same Brownian paths. As $t$ increases, more features would have been determined at $X_t$. For improved sample quality, we compose the learned any-step map over four coarse intervals.
\begin{figure}[H]
    \centering
    \begin{minipage}[t]{0.1\textwidth}
        \centering
        \includegraphics[width=\linewidth]{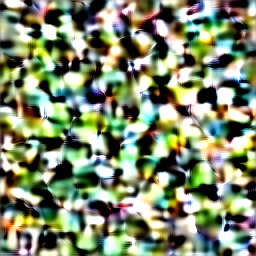}
        \small Fixed $X_0$
    \end{minipage}
    \hfill
    \begin{minipage}[t]{0.4\textwidth}
        \centering
        \includegraphics[width=\linewidth]{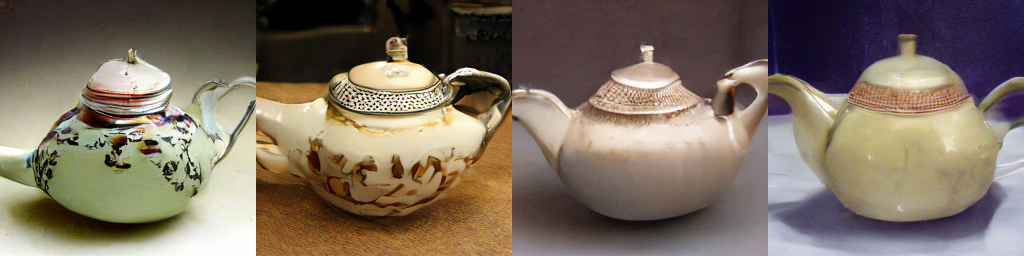}
        \small SDE roll-outs ($t=0$)
    \end{minipage}
    \hfill
    \begin{minipage}[t]{0.4\textwidth}
        \centering
        \includegraphics[width=\linewidth]{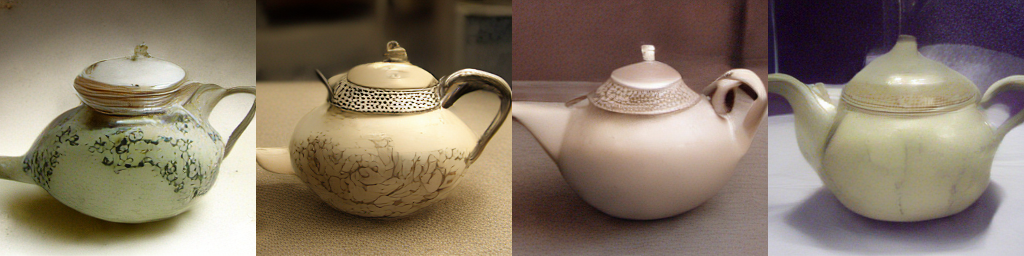}
        
        \small 4-step It\^o map predictions ($t=0$)
    \end{minipage}

    \vspace{6pt}

    \begin{minipage}[t]{0.1\textwidth}
        \centering
        \includegraphics[width=\linewidth]{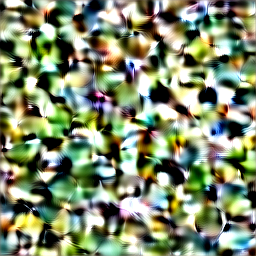}
        
        \small Fixed $X_{0.2}$
    \end{minipage}
    \hfill
    \begin{minipage}[t]{0.4\textwidth}
        \centering
        \includegraphics[width=\linewidth]{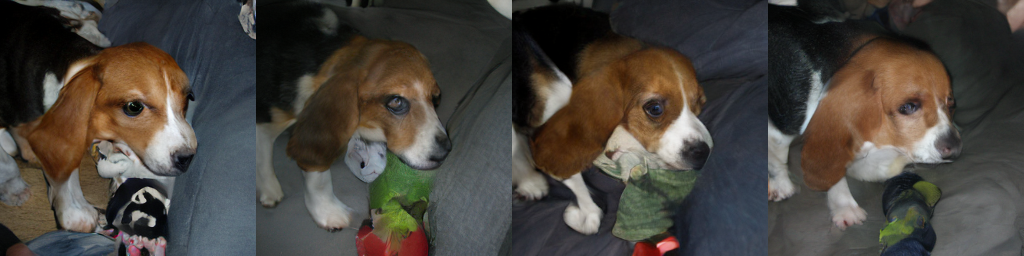}
        
        \small SDE roll-outs ($t=0.2$)
    \end{minipage}
    \hfill
    \begin{minipage}[t]{0.4\textwidth}
        \centering
        \includegraphics[width=\linewidth]{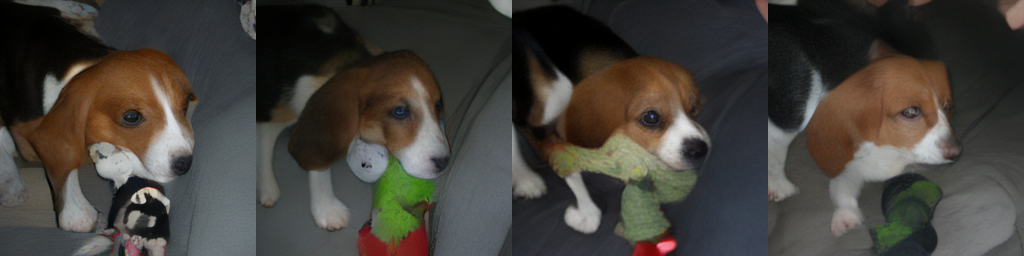}
        
        \small 4-step It\^o map predictions ($t=0.2$)
    \end{minipage}

    \caption{Posterior sampling from fixed intermediate $X_t$. \textbf{Top row}: sampling from fixed initial state $X_0$. \textbf{Bottom row}: sampling from fixed intermediate state $X_{0.2}$. In each row, \textbf{left}: fixed input state; \textbf{middle}: SDE roll-outs under sampled Brownian trajectories; \textbf{right}: It\^o map predictions using the same Brownian trajectories.}
    \label{fig:posterior_sampling_row}
\end{figure}
\paragraph{Reward-tilted steering.} For reward-tilted steering, we use the It\^o map trained with the DMF diagonal teacher. This experiment evaluates the learned Brownian-conditioned off-diagonal map and the resulting control estimators on top of a strong pretrained diagonal drift. We compare the average reward of steered output using each estimator against the best-of-N, DPS, and MFM baselines using ImageReward \citep{imagereward} and HPSv2 \citep{hpsv2}. In Table \ref{tab:reward}, we report the average reward across 16 classes and prompts. All steering methods are evaluated on the same class-prompt set and use the same random initializations and Brownian trajectories when applicable; implementation details are given in Appendix \ref{table1append}.
\begin{table}[H]
\centering
\setlength{\tabcolsep}{4pt}
\begin{tabular}{@{}lrrrrrrrrrr@{}}
\toprule
Reward type & SDE &  It\^o-GF &It\^o-G & BEL & BEL-I & DPS & Bo10 & Bo100 &  MFM-GF &MFM-G  \\
\midrule
ImageReward & 0.42 &  1.30 &\textbf{1.88} & 0.87 & 0.49 & 0.55 & 1.09 & 1.36  & 1.38& 1.83 \\
HPSv2       & 0.25 & 0.29 & \textbf{0.38} & 0.25 & 0.25 & 0.26 & 0.28 & 0.30 & 0.26 & 0.26 \\
\bottomrule
\end{tabular}
\caption{Average rewards for different estimators over classes and prompts.}
\label{tab:reward}
\end{table}
Table \ref{tab:reward} shows that It\^o-G achieves the highest average reward under both reward models. On ImageReward, it slightly improves over MFM-G and outperforms DPS and Best-of-100; on HPSv2, it gives a larger gain over all baselines. It\^o-GF also improves over the unsteered SDE, DPS, and Best-of-10, and is competitive with Best-of-100 and MFM-GF. The BEL-family estimators provide smaller gains, with more consistent improvements on ImageReward than HPSv2. Overall, these results suggest that Brownian-conditioned It\^o maps support competitive large-scale reward steering, with the strongest performance obtained when reward gradients are available.
\section{Conclusion}
We introduced It\^o maps as Brownian-conditioned any-step transition operators for generative SDEs, together with distillation objectives for learning them.
In contrast to deterministic flow maps, the It\^o maps provide direct access to one-step transition kernels, explicitly parametrized by the Brownian path $W$. We use this structure to derive algorithms for inference-time steering, covering both settings where reward gradients are available and where they are unavailable or unstable. Across synthetic posterior sampling, MNIST class-mixture steering, and ImageNet reward steering, It\^o maps support diverse conditional sampling and competitive reward alignment.

We hope this work encourages further study of stochastic flow maps as primitives for posterior sampling and inference-time control.

\section*{Acknowledgments}
We thank Sam Howard, Iskander Azangulov, Franklin Shiyi Wang, Brian Lee, Carles Domingo-Enrich, and George Deligiannidis for fruitful conversations. PP is supported by the EPSRC CDT in Modern Statistics and Statistical Machine Learning [EP/S023151/1], a Google PhD Fellowship, and an NSERC Postgraduate Scholarship (PGS D). MSA is supported by a Junior Fellowship at the Harvard Society of Fellows as well as the National Science Foundation under Cooperative Agreement PHY-2019786 (The NSF AI Institute for Artificial Intelligence and Fundamental Interactions\footnote{http://iaifi.org/}). This work has been made possible in part by a gift from the Chan Zuckerberg Initiative Foundation to establish the Kempner Institute for the Study of Natural and Artificial Intelligence.

\bibliographystyle{assets/acl_natbib.bst}
\bibliography{references.bib}

@inproceedings{howtobuild,
  title={How to build a consistency model: Learning flow maps via self-distillation},
  author={Boffi, Nicholas M. and Albergo, Michael S. and Vanden-Eijnden, Eric},
  booktitle={Advances in Neural Information Processing Systems},
  year={2025}
}

@inproceedings{dmf,
  title = {Decoupled MeanFlow: Turning Flow Models into Flow Maps for Accelerated Sampling},
  author = {Lee, Kyungmin and Yu, Sihyun and Shin, Jinwoo},
  booktitle = {International Conference on Learning Representations},
  year = {2026}
}

@article{mfm,
  title = {Meta Flow Maps enable scalable reward alignment},
  author = {Potaptchik, Peter and Saravanan, Adhi and
            Mammadov, Abbas and Prat, Alvaro and
            Albergo, Michael S. and Teh, Yee Whye},
  journal = {arXiv preprint},
  year = {2026}
}

@misc{song2020score,
  title = {Score-Based Generative Modeling through Stochastic Differential Equations},
  author = {Song, Yang and Sohl-Dickstein, Jascha and Kingma, Diederik P. and Kumar, Abhishek and Ermon, Stefano and Poole, Ben},
  year = {2020},
  eprint = {2011.13456},
  archivePrefix = {arXiv},
  primaryClass = {cs.LG},
  url = {https://arxiv.org/abs/2011.13456}
}

@inproceedings{ddpm,
  title = {Denoising Diffusion Probabilistic Models},
  author = {Ho, Jonathan and Jain, Ajay N. and Abbeel, Pieter},
  booktitle = {Advances in Neural Information Processing Systems},
  volume = {33},
  pages = {6840--6851},
  year = {2020}
}

@inproceedings{lipman2023fm,
  title = {Flow Matching for Generative Modeling},
  author = {Lipman, Yaron and Chen, Ricky T. Q. and Ben-Hamu, Heli and Nickel, Maximilian and Le, Matthew},
  booktitle = {The Eleventh International Conference on Learning Representations},
  year = {2023}
}

@inproceedings{albergo23si,
  title = {Building Normalizing Flows with Stochastic Interpolants},
  author = {Albergo, Michael S. and Vanden-Eijnden, Eric},
  booktitle = {The Eleventh International Conference on Learning Representations},
  year = {2023}
}

@article{daipra,
  author = {Dai Pra, Paolo},
  title = {A Stochastic Control Approach to Reciprocal Diffusion Processes},
  journal = {Applied Mathematics and Optimization},
  volume = {23},
  number = {1},
  pages = {313--329},
  year = {1991},
  doi = {10.1007/BF01442404}
}

@article{kosambi,
  author = {Kosambi, D. D.},
  title = {Statistics in Function Space},
  journal = {Journal of the Indian Mathematical Society},
  volume = {7},
  pages = {76--88},
  year = {1943}
}

@article{karhunen,
  author = {Karhunen, Kari},
  title = {{\"U}ber lineare Methoden in der Wahrscheinlichkeitsrechnung},
  journal = {Annales Academiae Scientiarum Fennicae. Series A. I. Mathematica-Physica},
  volume = {37},
  year = {1947}
}

@incollection{loeve,
  author = {Lo\`eve, Michel},
  title = {Fonctions al\'eatoires du second ordre},
  booktitle = {Processus stochastiques et mouvement Brownien},
  publisher = {Gauthier-Villars},
  address = {Paris},
  year = {1948}
}

@InProceedings{BEL,
  title = 	 {Conditioning Diffusions Using Malliavin Calculus},
  author =       {Pidstrigach, Jakiw and Baker, Elizabeth Louise and Domingo-Enrich, Carles and Deligiannidis, George and N\"{u}sken, Nikolas},
  booktitle = 	 {Proceedings of the 42nd International Conference on Machine Learning},
  pages = 	 {49292--49315},
  year = 	 {2025},
  volume = 	 {267},
  series = 	 {Proceedings of Machine Learning Research},
  month = 	 {13--19 Jul},
  publisher =    {PMLR},
  url = 	 {https://proceedings.mlr.press/v267/pidstrigach25a.html}
}

@inproceedings{consistency,
  title = {Consistency Models},
  author = {Song, Yang and Dhariwal, Prafulla and Chen, Mark and Sutskever, Ilya},
  booktitle = {Proceedings of the 40th International Conference on Machine Learning},
  series = {Proceedings of Machine Learning Research},
  volume = {202},
  pages = {32211--32252},
  year = {2023}
}

@inproceedings{dps,
  title = {Diffusion Posterior Sampling for General Noisy Inverse Problems},
  author = {Chung, Hyungjin and Kim, Jeongsol and McCann, Michael T. and Klasky, Marc L. and Ye, Jong Chul},
  booktitle = {The Eleventh International Conference on Learning Representations},
  year = {2023}
}

@inproceedings{dit,
  title = {Scalable Diffusion Models with Transformers},
  author = {Peebles, William and Xie, Saining},
  booktitle = {Proceedings of the IEEE/CVF International Conference on Computer Vision},
  pages = {4195--4205},
  year = {2023}
}

@inproceedings{sit,
  title = {SiT: Exploring Flow and Diffusion-Based Generative Models with Scalable Interpolant Transformers},
  author = {Ma, Nanye and Goldstein, Mark and Albergo, Michael S. and Boffi, Nicholas M. and Vanden-Eijnden, Eric and Xie, Saining},
  booktitle = {European Conference on Computer Vision},
  year = {2024}
}

@inproceedings{imagereward,
  title = {ImageReward: Learning and Evaluating Human Preferences for Text-to-Image Generation},
  author = {Xu, Jiazheng and Liu, Xiao and Wu, Yuchen and Tong, Yuxuan and Li, Qinkai and Ding, Ming and Tang, Jie and Dong, Yuxiao},
  booktitle = {Advances in Neural Information Processing Systems},
  volume = {36},
  year = {2023}
}

@article{hpsv2,
  title={Human Preference Score v2: A Solid Benchmark for Evaluating Human Preferences of Text-to-Image Synthesis},
  author={Wu, Xiaoshi and Hao, Yiming and Sun, Keqiang and Chen, Yixiong and Zhu, Feng and Zhao, Rui and Li, Hongsheng},
  journal={arXiv preprint arXiv:2306.09341},
  year={2023}
}

@misc{tvm,
  title = {Terminal Velocity Matching},
  author = {Zhou, Linqi and Parger, Mathias and Haque, Ayaan and Song, Jiaming},
  year = {2025},
  eprint = {2511.19797},
  archivePrefix = {arXiv},
  primaryClass = {cs.LG},
  url = {https://arxiv.org/abs/2511.19797}
}

@article{abve_jmlr,
  author  = {Albergo, Michael and Boffi, Nicholas M. and Vanden-Eijnden, Eric},
  title   = {Stochastic Interpolants: A Unifying Framework for Flows and Diffusions},
  journal = {Journal of Machine Learning Research},
  year    = {2025},
  volume  = {26},
  number  = {209},
  pages   = {1--80},
  url     = {http://jmlr.org/papers/v26/23-1605.html}
}

@article{lagrangian,
  author  = {Arnold, Vladimir I.},
  title   = {Sur la g\'eom\'etrie diff\'erentielle des groupes de {L}ie de dimension infinie et ses applications \`a l'hydrodynamique des fluides parfaits},
  journal = {Annales de l'Institut Fourier},
  volume  = {16},
  number  = {1},
  pages   = {319--361},
  year    = {1966},
  doi     = {10.5802/aif.233},
  url     = {https://aif.centre-mersenne.org/articles/10.5802/aif.233/}
}

@inproceedings{rectifiedflow,
  title = {Flow Straight and Fast: Learning to Generate and Transfer Data with Rectified Flow},
  author = {Liu, Xingchao and Gong, Chengyue and Liu, Qiang},
  booktitle = {The Eleventh International Conference on Learning Representations},
  year = {2023}
}

@article{doob1957,
  author  = {Doob, J. L.},
  title   = {Conditional Brownian Motion and the Boundary Limits of Harmonic Functions},
  journal = {Bulletin de la Soci{\'e}t{\'e} Math{\'e}matique de France},
  volume  = {85},
  pages   = {431--458},
  year    = {1957},
  doi     = {10.24033/bsmf.1494}
}

@inproceedings{freedom,
  title     = {FreeDoM: Training-Free Energy-Guided Conditional Diffusion Model},
  author    = {Yu, Jiwen and Wang, Yinhuai and Zhao, Chen and Ghanem, Bernard and Zhang, Jian},
  booktitle = {Proceedings of the IEEE/CVF International Conference on Computer Vision},
  year      = {2023}
}

@inproceedings{ugd,
  title     = {Universal Guidance for Diffusion Models},
  author    = {Bansal, Arpit and Chu, Hong-Min and Schwarzschild, Avi and Sengupta, Soumyadip and Goldblum, Micah and Geiping, Jonas and Goldstein, Tom},
  booktitle = {International Conference on Learning Representations},
  year      = {2024}
}

@inproceedings{lgd,
  title     = {Loss-Guided Diffusion Models for Plug-and-Play Controllable Generation},
  author    = {Song, Jiaming and Zhang, Qinsheng and Yin, Hongxu and Mardani, Morteza and Liu, Ming-Yu and Kautz, Jan and Chen, Yongxin and Vahdat, Arash},
  booktitle = {Proceedings of the 40th International Conference on Machine Learning},
  series    = {Proceedings of Machine Learning Research},
  volume    = {202},
  pages     = {32483--32498},
  year      = {2023}
}

@misc{uehara2025ita,
  title         = {Inference-Time Alignment in Diffusion Models with Reward-Guided Generation: Tutorial and Review},
  author        = {Uehara, Masatoshi and Zhao, Yulai and Wang, Chenyu and Li, Xiner and Regev, Aviv and Levine, Sergey and Biancalani, Tommaso},
  year          = {2025},
  eprint        = {2501.09685},
  archivePrefix = {arXiv},
  primaryClass  = {cs.LG},
  url           = {https://arxiv.org/abs/2501.09685}
}

@inproceedings{meanflow,
  title     = {Mean Flows for One-step Generative Modeling},
  author    = {Geng, Zhengyang and Deng, Mingyang and Bai, Xingjian and Kolter, J. Zico and He, Kaiming},
  booktitle = {Advances in Neural Information Processing Systems},
  year      = {2025}
}

@misc{mpgd,
  title         = {Manifold Preserving Guided Diffusion},
  author        = {He, Yutong and Murata, Naoki and Lai, Chieh-Hsin and Takida, Yuhta and Uesaka, Toshimitsu and Kim, Dongjun and Liao, Wei-Hsiang and Mitsufuji, Yuki and Kolter, J. Zico and Salakhutdinov, Ruslan and Ermon, Stefano},
  year          = {2023},
  eprint        = {2311.16424},
  archivePrefix = {arXiv},
  primaryClass  = {cs.LG},
  url           = {https://arxiv.org/abs/2311.16424}
}

@inproceedings{vargas2022nsf,
  title     = {Bayesian Learning via Neural Schr{\"o}dinger-F{\"o}llmer Flows},
  author    = {Vargas, Francisco and Ovsianas, Andrius and Fernandes, David Lopes and Girolami, Mark and Lawrence, Neil D. and N{\"u}sken, Nikolas},
  booktitle = {Symposium on Advances in Approximate Bayesian Inference},
  year      = {2022},
  url       = {https://openreview.net/forum?id=1Fqd10N5yTF}
}

@inproceedings{akhoundsadegh2024idem,
  title     = {Iterated Denoising Energy Matching for Sampling from Boltzmann Densities},
  author    = {Akhound-Sadegh, Tara and Rector-Brooks, Jarrid and Bose, Joey and Mittal, Sarthak and Lemos, Pablo and Liu, Cheng-Hao and Sendera, Marcin and Ravanbakhsh, Siamak and Gidel, Gauthier and Bengio, Yoshua and Malkin, Nikolay and Tong, Alexander},
  booktitle = {Proceedings of the 41st International Conference on Machine Learning},
  series    = {Proceedings of Machine Learning Research},
  volume    = {235},
  pages     = {760--786},
  year      = {2024},
  url       = {https://proceedings.mlr.press/v235/akhound-sadegh24a.html}
}

@inproceedings{ctm,
  title     = {Consistency Trajectory Models: Learning Probability Flow {ODE} Trajectory of Diffusion},
  author    = {Kim, Dongjun and Lai, Chieh-Hsin and Liao, Wei-Hsiang and Murata, Naoki and Takida, Yuhta and Uesaka, Toshimitsu and He, Yutong and Mitsufuji, Yuki and Ermon, Stefano},
  booktitle = {International Conference on Learning Representations},
  year      = {2024}
}

@inproceedings{shortcut,
  title     = {One Step Diffusion via Shortcut Models},
  author    = {Frans, Kevin and Hafner, Danijar and Levine, Sergey and Abbeel, Pieter},
  booktitle = {International Conference on Learning Representations},
  year      = {2025}
}

@misc{sebastian_mnist_classifier,
  author = {Sebastian, Jeril},
  title = {{jerilseb/mnist-classifier}},
  howpublished = {Hugging Face model repository},
  year = {2026},
  note = {Accessed: 2026}
}

@book{kunita1990stochastic,
  title={Stochastic flows and stochastic differential equations},
  author={Kunita, Hiroshi},
  volume={24},
  year={1990},
  publisher={Cambridge university press}
}

\clearpage
\beginappendix

\section{Mathematical Derivations and Proofs}\label{appenda}
\paragraph{Proof to \eqref{eq:sdeinterpdrift}.}\label{proofdiag} Recall that the PF-ODE has diagonal 
\begin{equation}
    b_t(x)= \mathbb{E}[X_1-X_0\mid I_t=x].
\end{equation} By standard Tweedie identity for Gaussian-corrupted interpolants (see, e.g. \citep{abve_jmlr}), we also know 
\begin{equation}
    \nabla\log p_t(x)=\mathbb{E}[-X_0/(1-t)\mid I_t=x].
\end{equation} We then deduce that \begin{equation}
    b_t(x)+\frac{\sigma_t^2}{2}\nabla\log p_t(x)=\mathbb{E}_{(X_0,X_1)}\left[ X_1-\left(1+\frac{\sigma_t^2}{2(1-t)}\right)X_0 \mid I_t=x\right].\qed 
\end{equation}
\subsection{Consistency Loss Objectives}
In this section, we first introduce the PSD objective, and then show that LSD and PSD are both valid consistency objectives for It\^o map training; on the other hand, we prove Eulerian-type losses cannot be used to train It\^o maps.
\paragraph{Progressive self-distillation (PSD) objective.} One important property of the It\^o map is its semigroup property: $\Phi_{s,t}(\cdot, W) = \Phi_{u,t}(\cdot, W) \circ \Phi_{s, u}(\cdot, W)$. Therefore we can augment the learning of stochastic flow maps with the following objective:
\begin{equation}
    \mathcal{L}_{\text{PSD}}(F) = \int_{s \leq u \leq t} \mathbb E \left[\left \|F_{s, t}( X_s, W) - F_{u, t}(F_{s, u}(X_s, W), W)\right \|^2\right ]dsdudt. \label{psd}
\end{equation}
Just as LSD, this objective can also be simplified to depend only on the average velocity:
\begin{prop} (c.f. \citet{howtobuild})
    The PSD loss (\ref{psd}) is equivalent to a loss function involving only the Lipschitz component $G_{s,t}$, where $\gamma = \frac{u-s}{t-s}$ and $1 - \gamma = \frac{t-u}{t-s}$:
    \begin{align}
        \L_{\PSD}(F) = \int\limits_{s \leq u \leq t} \mathbb E \left[\left \|G_{s, t}( I_s, W) - \gamma \cdot G_{s,u}(I_s, W) - (1-\gamma) G_{u,t} (F_{s,u}(I_s), W) \right \|^2\right ]dsdudt.
    \end{align}
\end{prop}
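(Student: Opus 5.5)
The plan is to substitute the ansatz \eqref{eq:ansatz} into each flow map appearing in the PSD integrand and show that all state and martingale terms cancel. What remains is $(t-s)$ times the stated velocity residual. "Equivalent" will then mean equal up to the weight $(t-s)^2$, or exactly equal once that weight is absorbed into the measure over $(s,u,t)$, as in \citet{howtobuild}.

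\textbf{Expand each flow map.} Write $x=I_s$ (equivalently $X_s$) and $y=F_{s,u}(x,W)$. By \eqref{eq:ansatz},
\begin{align}
F_{s,t}(x,W) &= x+(t-s)G_{s,t}(x,W)+(M_t-M_s),\\
y &= x+(u-s)G_{s,u}(x,W)+(M_u-M_s),\\
F_{u,t}(y,W) &= y+(t-u)G_{u,t}(y,W)+(M_t-M_u).
\end{align}
Substituting the expression for $y$ into the last line gives
\begin{equation}
F_{u,t}(F_{s,u}(x,W),W)=x+(u-s)G_{s,u}(x,W)+(t-u)G_{u,t}(y,W)+(M_t-M_s).
\end{equation}
The key point is that the martingale increments telescope: $(M_u-M_s)+(M_t-M_u)=M_t-M_s$. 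This holds because the same Brownian path $W$, and hence the same $M$, drives both the direct map and the composed map. The increments are not learned, and they are evaluated pathwise.

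\textbf{Subtract and factor.} The terms $x$ and $M_t-M_s$ cancel, so
\begin{equation}
F_{s,t}-F_{u,t}\circ F_{s,u}=(t-s)\bigl[G_{s,t}(x,W)-\gamma G_{s,u}(x,W)-(1-\gamma)G_{u,t}(y,W)\bigr],
\end{equation}
where $\gamma=(u-s)/(t-s)$. Taking squared norms, expectations and integrating over $s\le u\le t$ gives the claimed form, with the factor $(t-s)^2$ as the only discrepancy.

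\textbf{Main obstacle: the $(t-s)^2$ factor.} The algebra above is routine. The step needing care is how to treat this factor. I would state that the two losses coincide after reweighting the time measure by $(t-s)^{-2}$, which is the convention in \citet{howtobuild}. I would also note that both losses have the same zero set, namely the $G$ satisfying the semigroup identity almost surely for almost every $(s,u,t)$, so they are equivalent as training objectives.

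\textbf{Well-definedness of the composition.} The composition is meaningful because $G_{u,t}$ is evaluated at the random state $y$ with the same $W$. This is consistent with the semigroup property of the stochastic flow from \citep{kunita1990stochastic}. When $W$ is replaced by finite features $\Psi$, I would remark that the features used on $[u,t]$ must come from the same underlying path.
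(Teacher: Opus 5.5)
Your proposal is correct and follows essentially the same route as the paper: substitute the ansatz into $F_{s,t}$ and $F_{u,t}\circ F_{s,u}$, cancel $X_s$ and the telescoping martingale increments, and factor out $(t-s)$. The paper's proof leaves the resulting $(t-s)^2$ weight implicit when it calls the losses ``equivalent,'' so your explicit treatment of it (reweighting the time measure, or noting that both losses share the same zero set) is a welcome clarification.
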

\begin{proof}
Substituting the definition of $F_{s,t}$ into both terms of the PSD loss:
\begin{align}
F_{s,t}(X_s, W) =& X_s + (t-s)G_{s,t}(X_s, W) + (M_t - M_s) \\
F_{u,t}(F_{s,u}(X_s, W), W) =& X_s + (u-s)G_{s,u}(X_s,W) + (t-u)G_{u,t}(F_{s,u}(X_s,W),W) \\
&+ (M_u - M_s) + (M_t - M_u).
\end{align}
Taking the difference, $X_s$ cancels and the martingale terms sum to zero, so we are left with
\[
(t-s)\Bigl[G_{s,t}(I_s,W) - \gamma\, G_{s,u}(I_s,W) - (1-\gamma)\,G_{u,t}(F_{s,u}(I_s),W)\Bigr],
\]
where $\gamma = \frac{u-s}{t-s}$ and $1-\gamma = \frac{t-u}{t-s}$. 
\end{proof}
Hence, the PSD training algorithm is given as in Algorithm \ref{alg:PSD}.
  \begin{algorithm}[H]
\caption{Training from scratch with progressive self-distillation (PSD).}
\label{alg:PSD}
\begin{algorithmic}[1]
  \Input $p_0 = p_{\text{noise}},\; p_1 = p_{\text{data}}$; model $\hat{G}_{s,t}(x,W)$
  \Repeat
    \State \textbf{Compute diagonal loss:}
    \State Sample time $t \sim U[0,1]$; simulate $X_0 \sim p_0$ and $X_1 \sim p_{\text{data}}$
    \State $I_t \gets tX_1 + (1-t)X_0$
    \State $\mathcal{L}_{\text{SI}} \gets \|\hat{G}_{t,t}(I_t,\text{zeros}) - (X_1 - 2X_0)\|_2^2$

    \State \textbf{Compute consistency loss:}
    \State Sample $(s,u,t)\sim U[0,1]$ and reorder such that $s<u<t$
    \State Sample $X_0\sim p_0$ and $X_1\sim p_{\text{data}}$
    \State Simulate Brownian trajectory $(M_t)_{t\in [0,1]}$ and extract Brownian features $W$
    \State $I_s \gets sX_1 + (1-s)X_0$
    \State $X_{u\mid s} \gets I_s + (u-s)\hat{G}_{s,u}(I_s,W) + M_u - M_s$
    \State $\gamma \gets \frac{u-s}{t-s}$
    \State $\mathcal{L}_{\text{PSD}} \gets \left\|\hat{G}_{s,t}(I_s,W) - \text{sg}\left[\gamma\,\hat{G}_{s,u}(I_s,W) + (1-\gamma)\hat{G}_{u,t}(X_{u\mid s},W)\right]\right\|_2^2$

    \State $\mathrm{loss} \gets \mathcal{L}_{\text{SI}} + \lambda \cdot \mathcal{L}_{\text{PSD}}$
    \State Update model by taking one optimizer step
  \Until{convergence}
  \Output Trained It\^o map $ \hat{X}_{s,t}(x,W) = x + (t-s)\hat{G}_{s,t}(x,W) + M_t - M_s$
\end{algorithmic}
\end{algorithm}

Next, we show that the It\^o map is indeed the unique minimizer of the LSD or PSD objective.
\begin{theorem}
    Assume that $\Phi_{s,t}(x,W)$ is the flow map of the stochastic dynamics\begin{equation}
        dX_t=b_t(X_t)dt+\sigma_t \,dW_t \label{stochasticdynamics}
    \end{equation}
    and \begin{equation}
        M_t=\int_0^t\sigma_udW_u
    \end{equation}
    with the average velocity $v_{s,t}(x,W)=(\Phi_{s,t}(x,W)-x-M_t+M_s)/(t-s)$ continuous in both time arguments, and $v_{t,t}(x,W)=b_t(x)$. Then a velocity field $G_{s,t}(x,W)$ is identically equal to $v_{s,t}(x,W)$ if and only if one of the following conditions hold:
    
    \begin{enumerate}[label=(\roman*)]
        \item (Lagrangian condition): $G_{s,t}$ solves the Lagrangian equation
        \begin{align}
           G_{t,t}(x+(t-s)G_{s,t}(x,W)+M_t-M_s,W)=G_{s,t}(x,W)+(t-s)\partial_tG_{s,t}(x,W) \label{loss:lsd}
        \end{align}
        for all $s\leq t$ in $[0,1]$ and for all $x \in \R^d$.

        \item (Semigroup condition): for all $s\leq u\leq t$ in $[0,1]$ and $x \in \R^d$ and $\gamma=(u-s)/(t-s)$: 
        \begin{align}
            G_{s,t}(x,W)=\gamma G_{s,u}(x,W)+(1-\gamma)G_{u,t}(x+(u-s)G_{s,u}(x,W)+M_u-M_s,W) \label{loss:psd}
        \end{align}
    \end{enumerate} \label{thm:cond}
\end{theorem}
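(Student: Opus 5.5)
For both directions I would work pathwise, with $W$ fixed, and use the identity $\Phi_{s,t}(x,W) = x+(t-s)v_{s,t}(x,W)+M_t-M_s$ from the statement. Define $F_{s,t}(x)\coloneqq x+(t-s)G_{s,t}(x,W)+M_t-M_s$, the candidate map induced by $G$. The strategy is to reduce each condition to a statement about $F$ that is known to characterise the true flow $\Phi$.

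\textbf{Necessity.} If $G=v$, then $F=\Phi$.
\begin{itemize}
\item[(i)] Differentiate the integral representation \eqref{avgvelo}, $(t-s)v_{s,t}(x,W)=\int_s^t b_u(\Phi_{s,u}(x,W))\,du$, in $t$. This gives $v_{s,t}+(t-s)\partial_t v_{s,t} = b_t(\Phi_{s,t}(x,W)) = v_{t,t}(\Phi_{s,t}(x,W),W)$, which is exactly the Lagrangian equation.
\item[(ii)] Substitute $F=\Phi$ into the flow property $\Phi_{s,t}=\Phi_{u,t}\circ\Phi_{s,u}$ and repeat the cancellation from the proof of the PSD proposition. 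The $x$ terms and the martingale increments cancel. Dividing by $t-s$ leaves the semigroup condition.
\end{itemize}

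\textbf{Sufficiency of (i).} Fix $s$ and $x$ and set $Y_t\coloneqq F_{s,t}(x)$ for $t\ge s$. By the Lagrangian equation,
\[
Y_t-(M_t-M_s) = x+(t-s)G_{s,t}
\]
is differentiable in $t$ with derivative $G_{s,t}+(t-s)\partial_tG_{s,t}=G_{t,t}(Y_t,W)$. The next step is to pin down the diagonal.
\begin{itemize}
\item If $G_{t,t}=b_t$ is assumed, as in the training setup where the diagonal is fixed by $\mathcal{L}_{\text{SI}}$, then $Y$ solves the integral equation $Y_t=x+\int_s^t b_u(Y_u)\,du+M_t-M_s$ with $Y_s=x$.
\item This is the same pathwise random ODE that $\Phi_{s,t}(x,W)$ solves: set $Z_t=Y_t-M_t$, so that $\dot Z_t=b_t(Z_t+M_t)$.
\item With $b$ Lipschitz in $x$ and $M$ continuous, uniqueness for this ODE gives $Y_t=\Phi_{s,t}(x,W)$. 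Hence $G_{s,t}=v_{s,t}$ for $t>s$, and at $t=s$ by continuity.
\end{itemize}
Here I must be explicit that the diagonal $G_{t,t}=v_{t,t}=b_t$ is part of the hypotheses. Without it, any $G$ coming from a different drift $\tilde b$ also satisfies both identities. I would therefore state the diagonal matching as an assumption implicit in the theorem, mirroring \citet{howtobuild}.

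\textbf{Sufficiency of (ii).} The semigroup condition is equivalent, by the same algebra as in the PSD proposition, to $F_{s,t}=F_{u,t}\circ F_{s,u}$. I would reduce this case to (i).
\begin{itemize}
\item Take $u=t-h$ and use $F_{t-h,t}(y)=y+hG_{t-h,t}(y,W)+M_t-M_{t-h}$.
\item Compute $F_{s,t}(x)-F_{s,t-h}(x) = hG_{t-h,t}(F_{s,t-h}(x),W)+M_t-M_{t-h}$.
\item Subtract the martingale increments, divide by $h$, and let $h\downarrow0$. Continuity of $G$ in its time arguments, together with joint continuity in $x$ (which I would assume as regularity), gives that $t\mapsto (t-s)G_{s,t}(x,W)$ has left derivative $G_{t,t}(F_{s,t}(x),W)$.
\item This expression is continuous in $t$, so the function is $C^1$ and the Lagrangian equation holds. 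Conclude by (i).
\end{itemize}

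\textbf{Main obstacle.} The main obstacle is this regularity bookkeeping: upgrading the one-sided difference quotient to a true derivative, and making sure the Lipschitz/uniqueness hypotheses on $b$ are available pathwise despite $M$ being only continuous, not differentiable. The resolution is to always work with $Y-M$, which is $C^1$.
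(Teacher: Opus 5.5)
Your proposal is correct. Part (i) is essentially the paper's argument: integrate $\partial_t[(t-s)G_{s,t}]=G_{t,t}(F_{s,t}(x),W)$ to recover the random integral equation. You make two implicit points explicit. The first is the uniqueness step, via $Z_t=Y_t-M_t$ and Lipschitz $b$, which the paper compresses into ``$F_{s,t}$ is by definition the It\^o map''. The second is the diagonal hypothesis $G_{t,t}=b_t$. You are right that it must be added: the statement imposes it only on $v$, yet the paper's proof invokes ``the diagonal condition'' at exactly this point and again at the end of (ii). Indeed $G\equiv 0$ satisfies both (i) and (ii) but differs from $v$ whenever $b\not\equiv 0$.

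The genuine difference is in (ii). The paper proves sufficiency directly. It iterates the semigroup identity over a partition $s=t_0<\cdots<t_{n+1}=t$ to get $X_t=x+\sum_i(t_{i+1}-t_i)G_{t_i,t_{i+1}}(X_{t_i},W)+M_t-M_s$, then lets the mesh vanish so the Riemann sum becomes $\int_s^t G_{u,u}(X_u)\,du$. You instead use the semigroup identity only infinitesimally to show that (ii) implies the Lagrangian equation, and then appeal to (i).

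Your route exposes the structural implication semigroup $\Rightarrow$ Lagrangian and keeps the ODE-uniqueness argument in one place. The price is the lemma that a continuous function with a continuous one-sided derivative is $C^1$. The paper's route never needs $\partial_t G_{s,t}$ to exist. However, controlling its Riemann-sum error tacitly requires $G_{a,b}(X_a)-G_{a,a}(X_a)\to 0$ uniformly in $a$ as $b-a\to 0$, i.e.\ the same joint continuity you assume openly.

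A small simplification: use the triple $s\le t\le t+h$ instead of $s\le t-h\le t$. The difference quotient is then $G_{t,t+h}(F_{s,t}(x),W)$ with the spatial argument frozen. That step then needs only continuity of $G$ in its time arguments, plus continuity of $b$.
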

\begin{proof}
    It is immediate that $v_{s,t}$ satisfies \eqref{loss:lsd} due to the general theory of stochastic flows. For uniqueness, let's assume that there exists another velocity field ${G}_{s,t}(x,W)$ satisfying \eqref{loss:lsd}, i.e. for any $s\leq t,x\in\mathbb{R}^d$ and Brownian representation $W$:
    \begin{equation}
        {G}_{s,t}(x,W)+(t-s)\partial_t {G}_{s,t}(x,W)= {G}_{t,t}(x+(t-s){G}_{s,t}(x,W)+M_t-M_s,W).
    \end{equation}
    Now define the process\begin{equation}
        F_{s,t}(x,W)\coloneqq x+(t-s){G}_{s,t}(x,W)+M_t-M_s \label{flowF}
    \end{equation} and $A_{s,t}(x,W)\coloneqq (t-s){G}_{s,t}(x,W)$, the equation above and the diagonal condition implies
    \begin{equation}
        \partial_tA_{s,t}(x,W)=b_t(F_{s,t}(x,W)).
    \end{equation}
    Integrating both sides from $s$ to $t$ yields\begin{align}
F_{s,t}(x,W)&=x+\int_s^t\partial_uA_{s,u}(x,W)du+M_t-M_s\\
&=x+\int_s^tb_u(F_{s,u}(x,W))du+M_t-M_s.
    \end{align}
    But then $F_{s,t}$ is by definition the It\^o map for the stochastic dynamics \eqref{stochasticdynamics}, and hence $G_{s,t}\equiv v_{s,t}$.\medskip\\
    For the semigroup condition, it is again clear that the average velocity of the true It\^o map must satisfy \eqref{loss:psd} by the general theory of stochastic flows, and we assume that ${G}$ is another velocity field satisfying \eqref{loss:psd}, i.e. for any $s\leq u\leq  t,x\in\mathbb{R}^d,W$, we have\begin{equation}
        {G}_{s,t}(x,W)=\gamma \,{G}_{s,u}(x,W)+(1-\gamma){G}_{u,t}(x+(u-s){G}_{s,u}(x,W)+M_u-M_s,W)
    \end{equation}
    Define $F_{s,t}(x,W) \coloneqq x + (t-s)G_{s,t}(x,W)+M_t-M_s$, then the condition above translates to \begin{equation}
        F_{s,t}(x,W)=F_{u,t}(F_{s,u}(x,W),W).
    \end{equation}
Define also $X_t := F_{s,t}(x, W)$, and fix a partition
\begin{equation}
    s = t_0 < t_1 < \cdots < t_n < t_{n+1} = t
\end{equation} of $[s,t]$.
Applying the semigroup condition repeatedly and substituting the parameterization
$F_{s,t}(x,W) = x + (t-s)G_{s,t}(x,W) + (M_t - M_s)$ for arbitrary $s\leq t$, we get
\begin{align*}
X_t &= F_{t_n, t}(X_{t_n}, W) \\
    &= X_{t_n} + (t - t_n)G_{t_n, t}(X_{t_n}, W) + (M_t - M_{t_n}) \\
    &= F_{t_{n-1},t_n}(X_{t_{n-1}}, W)
       + (t-t_n)G_{t_n,t}(X_{t_n},W) + (M_t - M_{t_n}) \\
    &= X_{t_{n-1}}
       + (t_n - t_{n-1})G_{t_{n-1},t_n}(X_{t_{n-1}},W)
       + (M_{t_n} - M_{t_{n-1}}) + (t-t_n)G_{t_n,t}(X_{t_n},W) + (M_t - M_{t_n}).
\end{align*}
Telescoping over the full partition, the martingale increments sum to $M_t - M_s$
and the $X$ terms collapse to $X_{t_0}=X_s=x$, giving
\[
X_t = x + \sum_{i=0}^{n} (t_{i+1} - t_i)\, G_{t_i, t_{i+1}}(X_{t_i}, W) + M_t-M_s.
\]
As the mesh $\max_i(t_{i+1}-t_i) \to 0$, continuity of $G_{s,t}$ in both arguments
gives $G_{t_i, t_{i+1}}(X_{t_i}, W) \to G_{t_i,t_i}(X_{t_i})$, so the Riemann sum converges to
$\int_s^t G_{u,u}(X_u)\,du$. Therefore
\[
X_t = x + \int_s^t G_{u,u}(X_u)\,du + M_t-M_s.
\]
If $G_{u,u}$ satisfies the diagonal condition
$G_{u,u}(x) = b_u(x)$, then $F_{s,t}(x,W)$ is indeed the It\^o map.
\end{proof}
We now apply Theorem \ref{thm:cond} above for general stochastic dynamics to our specific generative setup:
\begin{cor} Assume that the training distribution has full support over the relevant $(s,t,x,W)$ domain and $G$ is continuous. With the notation in \eqref{eq:gensde} and assume that the diagonal training has converged, i.e. the diagonal $G_{t,t}$ satisfies 
    \begin{align}
        G_{t,t}(x,W)=b_t(x)+\frac{\sigma_t^2}{2}\nabla\log p_t(x)
    \end{align}
    for every $t\in[0,1],x\in\mathbb{R}^d$ and every Brownian representation $W$, then $G_{s,t}$ is the average velocity of the true It\^o map if and only if it is the unique minimizer over $\Ghat_{s,t}$ of any of the following two objectives:
    \begin{enumerate}[label=(\roman*)]
        \item The Lagrangian self-distillation (LSD) objective:
        \begin{align}
            \mathcal{L}_{\text{LSD}}(\Ghat) = \int_{s \leq t} \mathbb E \left [\|\Ghat_{t,t}(X_t, W) - \Ghat_{s,t}(I_s, W) - (t-s) \partial_t \Ghat_{s,t}(I_s, W) \|^2\right]dsdt
        \end{align}

        \item The progressive self-distillation (PSD) objective where $\gamma = \frac{u-s}{t-s}$:
        \begin{align}
            \mathcal{L}_{\text{PSD}}(\Ghat) = \int_{s \leq u\leq t} \mathbb E \left [\| \Ghat_{s,t} (I_s, W) - \big( \gamma \cdot \Ghat_{s,u}(I_s, W) + (1- \gamma) \cdot \Ghat_{u,t}(X_u,W) \big)  \|^2\right]dsdudt
        \end{align}
        Above, the expectation $\mathbb{E}[\cdot]$ is taken over the random draws $(x_0, x_1)$, $I_s(x_0,x_1) \coloneqq (1-s)x_0 + sx_1$ and $X_t=I_s+(t-s)\hat{G}_{s,t}(I_s,W)+M_t-M_s$ for $t\geq s$, with $M_t=\int_0^t \sigma_u dW_u$.
        \end{enumerate}
\end{cor}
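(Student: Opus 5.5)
The plan is to reduce the corollary to Theorem \ref{thm:cond}, applied with drift $\tilde b_t(x)\coloneqq b_t(x)+\frac{\sigma_t^2}{2}\nabla\log p_t(x)$. With this drift, \eqref{eq:gensde} is exactly of the form \eqref{stochasticdynamics}, and the assumed converged diagonal gives $G_{t,t}=\tilde b_t$, which is the diagonal hypothesis $v_{t,t}=b_t$ of the theorem. Both objectives are integrals of nonnegative squared residuals, so a field $\hat G$ is a minimizer with value zero if and only if its residual vanishes for almost every $(s,t)$ (resp. $(s,u,t)$) and almost every $(x_0,x_1,W)$. The remaining work is to turn these almost-everywhere statements into the pointwise identities \eqref{loss:lsd} and \eqref{loss:psd}.

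For the ``if'' direction, take $\hat G=v$, the average velocity of the true It\^o map. Theorem \ref{thm:cond} says it satisfies \eqref{loss:lsd} and \eqref{loss:psd} identically. For LSD, the point $X_t=I_s+(t-s)v_{s,t}(I_s,W)+M_t-M_s$ is exactly the argument appearing in \eqref{loss:lsd} with $x=I_s$. For PSD, $X_u$ is the argument appearing in \eqref{loss:psd}. So both integrands vanish pointwise, both losses equal zero, and $v$ is a global minimizer.

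For the ``only if'' direction and uniqueness, suppose $\hat G$ attains zero loss. Then the residual vanishes almost everywhere with respect to the law of $(s,t,I_s,W)$. By the full-support assumption, and by continuity of $\hat G$, its $t$-derivative and the composition, the residual is a continuous function that vanishes on a dense set. It therefore vanishes everywhere on the relevant domain. This gives \eqref{loss:lsd} or \eqref{loss:psd} for all $s\le t$ (or $s\le u\le t$), all $x$ and all $W$. Theorem \ref{thm:cond} then yields $\hat G\equiv v$. Hence any minimizer coincides with $v$, which establishes uniqueness and the stated equivalence.

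The main obstacle is this passage from almost-everywhere to pointwise equality. It needs continuity of the residual, including of $\partial_t\hat G$ in the LSD case; I would add this as an assumption on the function class, in addition to continuity of $G$. It also needs the observation that $I_s$ has full support for $s<1$, because of the Gaussian component $x_0$. A secondary subtlety is that $W$ enters $\hat G$ only through a finite feature vector, whereas $M_t-M_s$ depends on the full path. I would treat $W$ as the full path in the idealized statement, or else observe that in the dyadic case with $t$ on the grid, $M_t-M_s$ is itself one of the features, so the identities are still well defined.
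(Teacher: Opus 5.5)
Your proposal is correct and follows the paper's proof essentially step for step: use Theorem~\ref{thm:cond} to show that $v$ has zero LSD/PSD loss, conclude that any minimizer has an almost surely vanishing residual, upgrade this to the pointwise identities using full support and continuity, and invoke Theorem~\ref{thm:cond} again for uniqueness. Your added caveats refine points the paper's proof passes over silently rather than departing from it: continuity of $\partial_t\hat G$ for the LSD residual, full support of $I_s$ for $s<1$ coming from the Gaussian $x_0$, and treating $W$ as the full path rather than a finite feature vector.
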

\begin{proof}
    Let $B_t(x)=b_t(x)+\frac{\sigma_t^2}{2}\nabla\log p_t(x)$. Convergence of diagonal implies that $G_{t,t}(x,W)=B_t(x)$. Let $v_{s,t}(x,W)$ be the average velocity of the true It\^o map, then by Theorem \ref{thm:cond}, $v_{s,t}$ satisfies both Lagrangian and semigroup conditions. This in particular implies that 
   \begin{equation}
       \mathcal{L}_{\text{LSD}}(v)=\mathcal{L}_{\text{PSD}}(v)=0.
   \end{equation} 
   Now let $G_{s,t}$ be a global minimizer. Since the loss is an integral of a squared norm and its minimum value is 0, the residual must vanish almost surely. This means that the Lagrangian condition \eqref{loss:lsd} and the semigroup condition \eqref{loss:psd} hold for $(s,t,x,W)$ almost surely. With training measure having full support on the relevant domain, and $G_{s,t}(x,W)$ being continuous in all arguments, \eqref{loss:lsd} and \eqref{loss:psd} hold for any $(s,t,x,W)$. By Theorem \ref{thm:cond}, $G_{s,t}$ is the average velocity of the true It\^o map.
\end{proof}

\paragraph{Eulerian self-distillation (ESD).} \label{esd}
The Eulerian perspective asks: given that my evolution will pass through $X_t$ at time $t$, what must the dynamics at time $s$ look like? Using the path-wise It\^o formula to differentiate $F_{s,t}$ with respect to the semi-martingale $(s, X_s)$, after algebraic manipulation we obtain \begin{equation}
    G_{s,t}(x,W)=(t-s)\partial_sG_{s,t}(x,W)+(I+\nabla_xG_{s,t}(x,W))b_s(x)
\end{equation}
Intuitively, it is clear that ESD is not the right objective to use because the only term carrying the real Brownian displacement, \begin{equation}
     X_t=I_s+(t-s)G_{s,t}(I_s,W)+M_t-M_s
 \end{equation} is not present in this condition. Consequently, the model would be confused about how to interpret the KL coefficients or the dyadic wavelets as Brownian motions because the objective provides no pathwise constraint linking the Brownian features to the Brownian displacement. Indeed, the following proposition rigorously shows that this is the case:
\begin{prop}
    The average velocity of the It\^o map is a minimizer of the ESD objective\begin{equation}
        \mathcal{L}_{\text{ESD}}(\Ghat)=\int_{s\leq t}\mathbb{E}\left[\|\Ghat_{s,t}(x,W)-(t-s)\partial_s\Ghat_{s,t}(x,W)-(I+\nabla_x\Ghat_{s,t}(x,W))b_s(x)\|^2\right]dsdt\label{loss:eur}
    \end{equation} but it is not unique. Therefore ESD cannot be used to train It\^o maps. Analogously, the mean flow objective is also invalid for It\^o map training.
\end{prop}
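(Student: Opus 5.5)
Two things have to be shown: the true average velocity $v_{s,t}$ zeroes the ESD residual, and some $\tilde G\neq v$ zeroes it as well. Since the loss is a nonnegative integral, any field with zero residual is a global minimizer. So non-uniqueness reduces to exhibiting a second exact solution of the Eulerian equation.

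\emph{Step 1 (the Itô map is a minimizer).} Write $B_s$ for the full drift of the generative SDE, which plays the role of $b_s$ in the ESD equation. The flow property gives $\Phi_{s,t}(X_s,W)=X_t$ for $X_s=\Phi_{0,s}(x_0,W)$, so the map $s\mapsto \Phi_{s,t}(X_s,W)$ is constant on $[0,t]$. Substitute $\Phi_{s,t}(y,W)=y+(t-s)v_{s,t}(y,W)+M_t-M_s$ and differentiate in $s$ with the pathwise (Itô--Ventzel/Kunita) formula. We use that $v$ is treated as a function of fixed Brownian features $W$, so its $s$-dependence is through its explicit time argument. The $dM_s$ contributions from $-M_s$ and from $dX_s$ then cancel against each other, provided the Jacobian factor $(I+(t-s)\nabla_x v)$ is handled as in the paper's "algebraic manipulation". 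Collecting the $ds$ terms gives
\[
v_{s,t}=(t-s)\partial_s v_{s,t}+\bigl(I+(t-s)\nabla_x v_{s,t}\bigr)B_s .
\]
This is the displayed ESD identity, with the paper's normalization of $\nabla_x$ absorbing the factor $(t-s)$. Hence the residual vanishes identically and $\mathcal L_{\text{ESD}}(v)=0$. I take this derivation as given from the preceding paragraph and only check that the sign conventions are consistent.

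\emph{Step 2 (non-uniqueness).} The key observation is that the ESD equation is a linear first-order transport PDE in $(s,x)$. Its coefficients depend only on $B_s$, not on $W$, and no boundary or consistency condition ties $G$ to the Brownian displacement. Two constructions are available.

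\textbf{(a)} Take the deterministic flow-map average velocity $u_{s,t}(x)$ of the ODE $\dot x=B_s(x)$, defined by $\Psi_{s,t}(x)=x+(t-s)u_{s,t}(x)$. It satisfies the same Eulerian equation by the standard deterministic argument: $\frac{d}{ds}\Psi_{s,t}(x_s)=0$ along ODE trajectories. It is independent of $W$, yet has the same diagonal $u_{t,t}=B_t$. Since $v_{s,t}(x,W)$ genuinely depends on $W$ whenever $\sigma\neq 0$ (e.g. in the Gaussian case it contains a nonzero random affine term), we get $u\neq v$.

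\textbf{(b)} More generally, for any bounded measurable $c(W)$, the field $v_{s,t}(x,W)+\text{(a solution of the homogeneous equation)}$ also works. A homogeneous solution can be built by the method of characteristics as $h(W)\,\phi(\Psi_{s,t}(x))/(t-s)$-type terms. For simplicity I would present only (a), which already gives a second zero of the loss.

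\emph{Step 3 (Mean Flow).} The MeanFlow identity is exactly the ESD equation written along $x$ with total derivative $\frac{d}{ds}=\partial_s+B_s\cdot\nabla_x$. It therefore has the same solution set, and the deterministic $u$ is again a spurious minimizer.

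The main obstacle is Step 1's pathwise differentiation: making rigorous that $v_{s,t}(\cdot,W)$ is differentiable in $s$ and $x$, and that the martingale terms cancel. This requires Kunita's smoothness of stochastic flows and care about anticipating dependence on $W$. I would lean on the footnoted theory of stochastic flows rather than re-prove it. The counterexample in Step 2 is comparatively routine, requiring only that $v$ depends nontrivially on $W$. I would verify this explicitly for $p_1$ Gaussian, where everything is linear.
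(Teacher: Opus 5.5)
Your Step 2(a) is exactly the paper's proof. You take the flow $\psi_{s,t}$ of $\dot x=B_s(x)$, differentiate the semigroup identity at the intermediate time, and read off that $u_{s,t}=(\psi_{s,t}(x)-x)/(t-s)$ has zero residual. Your check that $u\neq v$, because $v$ depends on $W$, makes explicit what the paper leaves implicit.

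The gap is in Step 1, which the paper also only asserts, and the route you sketch for it fails. Suppose $v_{s,t}(x,W)$ depended on $s$ only through a classical time argument. Then
\[
0=d_s\bigl[\Phi_{s,t}(X_s,W)\bigr]=\bigl[B_s-v+(t-s)\partial_s v+(t-s)\nabla_x v\,B_s\bigr]\,ds+(t-s)\nabla_x v\,\sigma_s\circ dW_s .
\]
Here $-dM_s$ cancels only the identity part of $(I+(t-s)\nabla_x v)\circ dX_s$, and the leftover noise term vanishes only if $\nabla_x v\equiv 0$. What actually happens is that $s\mapsto v_{s,t}(x,W)$ is itself rough. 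Kunita's backward equation $d_s\Phi_{s,t}=-\nabla_x\Phi_{s,t}\,(B_s\,ds+\sigma_s\circ\hat{d}W_s)$ gives $(t-s)\,d_s v=\bigl[v-(I+(t-s)\nabla_x v)B_s\bigr]ds-(t-s)\nabla_x v\,\sigma_s\circ\hat{d}W_s$. For $B_s(x)=-ax$ you can see this directly from $(t-s)v_{s,t}(x)=(e^{-a(t-s)}-1)x+\sigma\int_s^t(e^{-a(t-r)}-1)\,dW_r$. So the obstacle you name (``making rigorous that $v_{s,t}(\cdot,W)$ is differentiable in $s$'') cannot be overcome: the true average velocity is not in the domain of the loss as written. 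The first claim of the proposition holds only if $\partial_s$ is read as the $ds$-part of this backward semimartingale decomposition. You must adopt that reading explicitly, since it is the only thing that makes $v$ your second zero.

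Under the classical reading the picture is different. Set $\Psi_{s,t}(x)=x+(t-s)G_{s,t}(x,W)$; the ESD equation is then exactly the transport equation $\partial_s\Psi+\nabla_x\Psi\,B_s=0$. Boundedness of $G$ near the diagonal gives $\Psi_{t,t}=\mathrm{id}$, and along characteristics of $\dot x=B_s(x)$ this forces $\Psi_{s,t}=\psi_{s,t}$, i.e.\ $G=u$ for every $W$. So among fields regular up to the diagonal, the ESD minimizer (and, by your Step 3, the MeanFlow one) is unique and $W$-independent. The conclusion that ESD cannot train It\^o maps survives, arguably more cleanly, but not through non-uniqueness. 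The same computation shows why your Step 2(b) yields nothing admissible: the homogeneous perturbations $h(W)\phi(\psi_{s,t}(x))/(t-s)$ blow up as $s\uparrow t$ unless $\phi\equiv0$.

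Finally, restoring the factor $(t-s)$ in front of $\nabla_x$ is correct, but it fixes a typo in the paper rather than a normalization. Expanding $\nabla_x\psi_{s,t}=I+(t-s)\nabla_x u_{s,t}$ in the paper's own proof produces the factor. With the literal display, even $u$ would in general not be a zero.
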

\begin{proof}
    It suffices to find another solution with zero ESD loss. Consider the flow $\psi_{s,t}(x)$ of ODE \begin{equation}
        dX_t=b_t(X_t)dt
    \end{equation}
    and define
    \begin{equation}
        u_{s,t}(x) = \frac{\psi_{s,t}(x)-x}{t-s}
    \end{equation}
    Using the semigroup property of the flow $\psi$, we have \begin{equation}
        \psi_{s,t}(x)=\psi_{u,t}(\psi_{s,u}(x))
    \end{equation}
    Differentiating both sides w.r.t. $u$ and setting $u=s$, we get
    \begin{equation}
        0=\partial_s\psi_{s,t}(x)+\nabla_x\psi_{s,t}(x)\cdot b_s(x)
    \end{equation}
    Expanding with $\psi_{s,t}(x)=x+(t-s)u_{s,t}(x)$ and rearranging, we have:
    \begin{equation}
        u_{s,t}(x)-(t-s)\partial_su_{s,t}(x)-(I+\nabla_x u_{s,t}(x))b_s(x)=0
    \end{equation}
    namely $u_{s,t}$ also has zero loss for \eqref{loss:eur}. This concludes the proof. 
\end{proof}
\color{black}

\subsection{Karhunen–Loève Expansion}\label{klthm}
For completeness of the paper, we provide a proof for the commonly known theorem on the KL expansion:
\begin{theorem}
Let $(W_t)_{t\in[0,T]}$ be a standard Brownian motion defined on a probability sample space $\Omega$. Then there exist
independent standard Gaussian random variables $(\xi_n)_{n\ge 1}$ such that
for every $t\in[0,T]$,
\[
W_t
=
\sum_{n=1}^{\infty}
\frac{T}{(n-\tfrac12)\pi}\,\xi_n\,
\sqrt{\frac{2}{T}}
\sin\!\Big(\frac{(n-\tfrac12)\pi t}{T}\Big),
\]
with convergence in $L^2(\Omega)$ for each fixed $t$, and also in
$L^2(\Omega\times[0,T])$. Notice that for our purpose, the state space is every single piece of $\mathbb{R}$ in the orthogonal decomposition of the ambient space $\mathbb{R}^d$ that contains the data manifold, w.r.t. the standard basis.
\end{theorem}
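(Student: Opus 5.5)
We diagonalize the covariance operator of $W$ on $L^2([0,T])$ and expand $W$ in its eigenbasis. The covariance kernel is $K(s,t)=\mathbb{E}[W_sW_t]=\min(s,t)$. It is continuous, symmetric and positive semidefinite, so the integral operator $(\mathcal{K}f)(t)=\int_0^T\min(s,t)f(s)\,ds$ is a compact, self-adjoint, positive operator on $L^2([0,T])$.

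\textbf{Eigenfunctions.} We solve $\mathcal{K}e=\lambda e$ explicitly. Writing $(\mathcal{K}e)(t)=\int_0^t s\,e(s)\,ds+t\int_t^T e(s)\,ds$ and differentiating twice gives $\lambda e''=-e$. Evaluating at $t=0$ gives $e(0)=0$, and evaluating the first derivative at $t=T$ gives $e'(T)=0$. The solutions of this Sturm–Liouville problem are
\[
e_n(t)=\sqrt{2/T}\,\sin\!\big((n-\tfrac12)\pi t/T\big),\qquad \lambda_n=\frac{T^2}{(n-\frac12)^2\pi^2}.
\]
These $e_n$ form an orthonormal basis of $L^2([0,T])$. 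Completeness follows from standard Sturm–Liouville theory. Alternatively, $\mathcal{K}$ is injective, since $\mathcal{K}f=0$ forces $f=-(\mathcal{K}f)''=0$, so by the spectral theorem for compact self-adjoint operators the eigenvectors span $L^2([0,T])$.

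\textbf{Coefficients.} Define $Z_n\coloneqq\int_0^T W_t e_n(t)\,dt$, which is well defined pathwise since paths are continuous. By Fubini,
\[
\mathbb{E}[Z_nZ_m]=\langle e_n,\mathcal{K}e_m\rangle=\lambda_m\delta_{nm}.
\]
The family $(Z_n)$ is jointly Gaussian, being $L^2$-limits of linear functionals of a Gaussian process. Uncorrelated jointly Gaussian variables are independent, so $\xi_n\coloneqq Z_n/\sqrt{\lambda_n}$ are i.i.d. $\mathcal{N}(0,1)$. Since $\sqrt{\lambda_n}=T/((n-\frac12)\pi)$, this reproduces exactly the coefficients in the statement.

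\textbf{Convergence.} This is the main technical step. Let $S_N(t)=\sum_{n\le N}\sqrt{\lambda_n}\,\xi_n e_n(t)$ denote the partial sums. Expanding the square gives
\[
\mathbb{E}|W_t-S_N(t)|^2=K(t,t)-2\sum_{n\le N}\lambda_n e_n(t)^2+\sum_{n\le N}\lambda_n e_n(t)^2=t-\sum_{n\le N}\lambda_n e_n(t)^2,
\]
where we used $\mathbb{E}[W_tZ_n]=(\mathcal{K}e_n)(t)=\lambda_n e_n(t)$. Mercer's theorem gives $\min(s,t)=\sum_n\lambda_n e_n(s)e_n(t)$ uniformly on $[0,T]^2$, since the kernel is continuous and positive semidefinite. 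Hence the right-hand side tends to $0$ uniformly in $t$. This yields pointwise $L^2(\Omega)$ convergence, and integrating over $t$ gives convergence in $L^2(\Omega\times[0,T])$.

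If we prefer to avoid Mercer, it suffices to check directly that $\sum_n\lambda_n e_n(t)^2=t$. One way is to compute the Fourier–sine expansion of $s\mapsto\min(s,t)$ in the basis $(e_n)$. The tail is then bounded by $\sum_{n>N}\lambda_n\cdot 2/T=O(1/N)$, which also gives a rate of convergence.

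Finally, the remark about $\mathbb{R}^d$ needs no separate argument. Applying the result to each coordinate of a $d$-dimensional Brownian motion gives independent families of coefficients, one per coordinate.
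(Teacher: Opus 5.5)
Your proposal is correct and takes essentially the same route as the paper. Like the paper, you diagonalize the covariance operator of $\min(s,t)$ via $\lambda e''=-e$ with $e(0)=0$, $e'(T)=0$, set $\xi_n=\lambda_n^{-1/2}\int_0^T W_t e_n(t)\,dt$, and reduce the error to $t-\sum_{n\le N}\lambda_n e_n(t)^2$, which Mercer's theorem sends to zero. Your extra checks tighten steps the paper passes over quickly: injectivity of the operator, joint Gaussianity to get independence, and using the uniform Mercer convergence instead of the trace identity for the $L^2(\Omega\times[0,T])$ claim.
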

\begin{proof}
    For the standard Brownian motion, we have $\mathbb{E}[W_sW_t]=\min(s,t)$ and $\mathbb{E}[W_t]=0$. Thus the covariance kernel is given by
    \begin{equation}
        K(s,t)=\min(s,t)\quad \text{for }s,t\in[0,T]
    \end{equation}
    Define the covariance operator
    \begin{equation}
        C:L^2([0,T])\to L^2([0,T]):f\mapsto \left[t\mapsto \int_0^TK(s,t)f(s)ds\right]
    \end{equation}
    Since $K$ is continuous and symmetric on $[0,T]^2$, the operator $C$ is symmetric, positive and compact. By the spectral theorem for compact self-adjoint operators, $C$ has an orthonormal basis of eigenfunctions with nonnegative eigenvalues. Now suppose that $e$ is an eigenfunction with eigenvalue $\lambda>0$, then\begin{align}
        \lambda e(t)&=\int_0^T\min (s,t)e(s)ds\\
        &=\int_0^tse(s)ds+t\int_t^Te(s)ds
    \end{align}
    Taking derivative w.r.t. $t$, we have \begin{equation}
        \lambda e'(t)=te(t)+\int_t^Te(s)ds-te(t)=\int_t^Te(s)ds
    \end{equation}
    and the second order derivative is $\lambda e''(t)=-e(t)$. Thus, $e''(t)+\frac{1}{\lambda} e(t)=0$. At $t=0$, we have $\lambda e(0)=0$, and at $T$, we have $\lambda e'(t)=\int_T^Te(s)ds=0$. Solving this second order ODE (detailed computation omitted), we obtain normalized eigenfunctions
    \begin{equation}
        e_n(t)=\sqrt{\frac{2}{T}}\sin\left(\frac{(n-1/2)\pi t}{T}\right)
    \end{equation}
    with corresponding eigenvalue $\lambda_n=T^2/[(n-\frac{1}{2})^2\pi^2]$. The key construction in this proof is to define
    \begin{equation}
        \xi_n\coloneqq\frac{1}{\sqrt{\lambda_n}}\int_0^TW_te_n(t)dt
    \end{equation}
    Since $W_t$ has mean zero and $\xi_n$ is a linear functional of the Gaussian process $W$, $\xi_n$ is Gaussian with mean zero. To show that $\xi_n$ are i.i.d. standard Gaussian, it remains to understand
    \begin{align}
        \mathbb{E}[\xi_n\xi_m] &=\frac{1}{\sqrt{\lambda_n\lambda_m}}\int_0^T\int_0^T \mathbb{E}[W_sW_t]e_n(s)e_m(t)dsdt\\
        &=\frac{1}{\sqrt{\lambda_n\lambda_m}}\int_0^T\int_0^T K(s,t)e_n(s)e_m(t)dsdt\\
        &= \frac{1}{\sqrt{\lambda_n\lambda_m}}\langle Ce_n,e_m\rangle =\frac{\lambda_n}{\sqrt{\lambda_n\lambda_m}}\langle e_n,e_m\rangle=\delta_{mn}
    \end{align}
    Finally, it suffices to show that the partial sum $S_N(t)=\sum_{n=1}^N \sqrt{\lambda_n}\xi_ne_n(t) $ converges to $W_t$. For fixed time $t$, we have
    \begin{align}
        \mathbb{E}[|W_t-S_N(t)|^2]=\mathbb{E}[W_t^2]-2\sum_{n=1}^{N}\sqrt{\lambda_n}e_n(t)\mathbb{E}[W_t\xi_n]+\sum_{n=1}^N\sum_{m=1}^N\sqrt{\lambda_n\lambda_m}e_n(t)e_m(t)\mathbb{E}[\xi_n\xi_m]\label{err}
    \end{align}
    where $\mathbb{E}[W_t\xi_n]=\frac{1}{\sqrt{\lambda_n}}\int_0^TK(s,t)e_n(s)ds=\sqrt{\lambda_n}e_n(t)$ and $\mathbb{E}[\xi_n\xi_m]=\delta_{mn}$. Therefore, (\ref{err}) can be simplified as\begin{equation}
        \mathbb{E}[|W_t-S_N(t)|^2]=K(t,t)-\sum_{n=1}^N\lambda_ne_n(t)^2\label{eq:err2}
    \end{equation}
    But we know that the RHS converges to zero for every $t$ due to the Mercer expansion of kernel. For convergence in $L^2(\Omega\times [0,T])$, we integrate the error term (\ref{eq:err2}) above over $t$ to get
    \begin{align}
        \int_0^T\mathbb{E}[|W_t-S_N(t)|^2]dt &= \int_0^TK(t,t)dt-\sum_{n=1}^N\int_0^Te_n(t)^2dt\\
        &=T^2/2-\sum_{n=1}^N\lambda_n
    \end{align}
    which converges to zero by the trace identity for compact positive operators.
\end{proof}

\subsection{Consistent Monte Carlo Estimators of the Optimal Control.}
In this subsection, we justify that the estimators mentioned before are indeed consistent MC estimators of the optimal control $\nabla V_t(x)$.
\begin{theorem} [It\^o-GF] Fix $t\in[0,1)$ and $x\in\mathbb{R}^d$. Let $\Phi_{s,t}(x,\cdot)$ be the It\^o map and $W^1,...,W^Z$ be i.i.d. Brownian trajectories defined over $[t,1]$. Denote by $\mathbb{P}$ the path measure of the denoising SDE (\ref{eq:gensde}). Suppose the following integrability conditions hold:
\begin{equation}
    \mathbb{E}_{\mathbb{P}}[e^{r(X_1)}\mid X_t=x] < \infty \quad\text{and}\quad \mathbb{E}_{\mathbb{P}}[\|X_1\|e^{r(X_1)}\mid X_t=x]<\infty
\end{equation}
Define \begin{equation}
    \hat{\mu}^Z_t(x)\coloneqq  \frac{\sum_{j=1}^Z\Phi_{t,1}(x,W^j)\exp(r(\Phi_{t,1}(x,W^j)))}{\sum_{j=1}^Z\exp (r(\Phi_{t,1}(x,W^j)))}
\end{equation}
    and the It\^o-GF estimator
        \begin{equation}
        \hat{I}_{GF}^Z=\frac{2}{\sigma_t^2}\left[\frac{1}{1-t}\cdot\left(\hat{\mu}^Z_t(x) -x\right)-b_t(x)\right]
    \end{equation}
    then $\hat{I}_{GF}^Z\to \nabla V_t(x)$ almost surely as $Z\to\infty$. In particular, $\hat{I}_{GF}^Z$ is a strongly consistent Monte Carlo estimator of $\nabla V_t(x)$.
\end{theorem}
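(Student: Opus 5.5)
The proof has two parts: a strong law of large numbers for the self-normalized average $\hat\mu^Z_t(x)$, and an identity that expresses the reward-tilted conditional mean of $X_1$ through $b_t$ and $\nabla V_t$.

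\textbf{Step 1 (SLLN).} Conditional on $X_t=x$, the endpoints $\Phi_{t,1}(x,W^j)$ are i.i.d. with law $p_{1\mid t}(\cdot\mid x)$ under $\mathbb{P}$. This holds because $\Phi$ is the exact It\^o map and the $W^j$ are independent Brownian paths on $[t,1]$. The two integrability assumptions are exactly what is needed to apply the strong law of large numbers separately to the numerator and the denominator of $\hat\mu^Z_t$, each normalized by $1/Z$. The limit of the denominator is $\mathbb{E}_{\mathbb{P}}[e^{r(X_1)}\mid X_t=x]$, which is strictly positive. Hence, almost surely,
\begin{equation*}
\hat\mu^Z_t(x)\to \mu^*_t(x)\coloneqq\frac{\mathbb{E}_{\mathbb{P}}[X_1e^{r(X_1)}\mid X_t=x]}{\mathbb{E}_{\mathbb{P}}[e^{r(X_1)}\mid X_t=x]}=\mathbb{E}_{\mathbb{Q}}[X_1\mid X_t=x],
\end{equation*}
where $\mathbb{Q}$ is the terminally tilted path measure $d\mathbb{Q}\propto e^{r(X_1)}d\mathbb{P}$. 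The estimator is a continuous affine function of $\hat\mu^Z_t$, so it remains to prove the identity
\begin{equation*}
\mu^*_t(x)=x+(1-t)\Big(b_t(x)+\tfrac{\sigma_t^2}{2}\nabla V_t(x)\Big).
\end{equation*}

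\textbf{Step 2 (untilted identity).} First take $r\equiv 0$. For the linear interpolant, $X_1-I_t=(1-t)(X_1-X_0)$. Together with \eqref{eq:bt} this gives
\begin{equation*}
\mathbb{E}[X_1\mid I_t=x]=x+(1-t)b_t(x).
\end{equation*}
To transfer this to the SDE, I would argue that the generative SDE \eqref{eq:gensde} has the same conditional endpoint law as the interpolant, $p_{1\mid t}(\cdot\mid x)=\mathrm{Law}(X_1\mid I_t=x)$. For the Gaussian-base interpolant the process is the time reversal of a noising diffusion, and a time reversal preserves the joint law of $(X_t,X_1)$. With the choice $\sigma_t^2=2(1-t)$, this reversal reproduces the conditional structure of $I_t$. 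Consequently $\mathbb{E}_{\mathbb{P}}[X_1\mid X_t=x]=x+(1-t)b_t(x)$.

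\textbf{Step 3 (tilting).} By Doob's $h$-transform, $\mathbb{Q}$ is the law of \eqref{controlsde}. The tilt only reweights the terminal marginal, so $\mathbb{Q}$ has the same bridges as $\mathbb{P}$. Hence $\mathbb{Q}$ is the generative SDE of the same form built from the tilted coupling $\rho^*(x_0,x_1)\propto\rho(x_0,x_1)e^{r(x_1)}$, with marginals $p^*_t\propto p_te^{V_t}$ as in \eqref{tiltmarg}. Applying Step 2 to this tilted construction gives $\mathbb{E}_{\mathbb{Q}}[X_1\mid X_t=x]=x+(1-t)b^*_t(x)$, where $b^*_t$ is the probability-flow velocity of $\mathbb{Q}$. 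This velocity is the total drift of $\mathbb{Q}$ minus $\tfrac{\sigma_t^2}{2}\nabla\log p^*_t$:
\begin{equation*}
b^*_t=b_t+\tfrac{\sigma_t^2}{2}\nabla\log p_t+\sigma_t^2\nabla V_t-\tfrac{\sigma_t^2}{2}\big(\nabla\log p_t+\nabla V_t\big)=b_t+\tfrac{\sigma_t^2}{2}\nabla V_t.
\end{equation*}
Substituting $\hat\mu^Z_t\to\mu^*_t$ into $\hat I^Z_{GF}$ then gives $\hat I^Z_{GF}\to\nabla V_t(x)$ almost surely.

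\textbf{Main obstacle.} The delicate point is the identification $p_{1\mid t}=\mathrm{Law}(X_1\mid I_t=x)$ in Step 2, and its tilted analogue in Step 3. For a general coupling $\rho$ and a general $\sigma_t$, the SDE matches only the marginals of the interpolant, not its two-time conditionals. I would therefore first prove this identity for an independent Gaussian base with $\sigma_t^2=2(1-t)$, via the time-reversal argument. Alternatively, I would verify directly that $m_t(x)=x+(1-t)b_t(x)$ satisfies the backward Kolmogorov equation with terminal condition $m_1(x)=x$, and then conclude by uniqueness.
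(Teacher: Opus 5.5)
Your skeleton matches the paper's proof: the SLLN for numerator and denominator, Bayes to identify the limit as $\mathbb{E}_{\mathbb{Q}}[X_1\mid X_t=x]$, the $h$-transform marginals $p_t^*\propto p_te^{V_t}$, and the computation $b^*_t=b_t+\tfrac{\sigma_t^2}{2}\nabla V_t$, which agrees with the paper's display line for line. You also correctly single out the step the paper only asserts (``just as the probability flow drift is $\frac{1}{1-t}(\mathbb{E}[X_1\mid X_t=x]-x)$''), namely $\mathbb{E}_{\mathbb{P}}[X_1\mid X_t=x]=x+(1-t)b_t(x)$ and its tilted analogue. However, your resolution of that step is wrong, and this is a genuine gap. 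With $\sigma_t^2=2(1-t)$ the generative SDE is \emph{not} the time reversal that reproduces the two-time law of $(I_t,x_1)$. Write $\tau=1-t$. Matching the conditional mean $(1-\tau)x_1$ and variance $\tau^2$ forces the noising diffusion $dY_\tau=-\frac{Y_\tau}{1-\tau}\,d\tau+\sqrt{2\tau/(1-\tau)}\,dB_\tau$. Its reversal is the generative SDE with $\sigma_t^2=2(1-t)/t$, not $2(1-t)$. For any other $\sigma_t$ the SDE shares only the one-time marginals with $I_t$, so both your time-reversal route and the backward-Kolmogorov check fail.

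This cannot be patched for the paper's choice of $\sigma_t$, because the conclusion itself fails. Take $d=1$, $x_0\sim\mathcal{N}(0,1)$ independent of $x_1\sim\mathcal{N}(0,1)$, and $v_t=t^2+(1-t)^2$. Then $b_t(x)=\frac{2t-1}{v_t}x$ and $\nabla\log p_t(x)=-x/v_t$. With $\sigma_t^2=2(1-t)$ the SDE is linear with drift $\frac{3t-2}{v_t}x$, so
$$\mathbb{E}_{\mathbb{P}}[X_1\mid X_0=x]=\exp\bigl(\textstyle\int_0^1\tfrac{3u-2}{v_u}\,du\bigr)x=e^{-\pi/4}x,$$
whereas $x+b_0(x)=\mathbb{E}[x_1\mid I_0=x]=0$. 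Now take $r\equiv 0$, so $\nabla V\equiv 0$ and the integrability hypotheses hold trivially. The estimator at $t=0$ converges to $e^{-\pi/4}x\neq 0$.

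What your argument does prove is the theorem for an independent Gaussian base with $\sigma_t^2=2(1-t)/t$ and $t\in(0,1)$. In that setting your Step 3 ``same bridges'' reasoning also goes through. Tilting keeps the base independent and Gaussian, and $\mathbb{Q}$ is then the reversal of the same noising diffusion started from $p_1e^{r}/Z$.
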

\begin{proof}
Let $\mathbb{Q}$ be the tilted target measure \begin{equation}
    \frac{d\mathbb{Q}}{d\mathbb{P}} = \frac{e^{r(X_1)}}{\mathbb{E}[e^{r(X_1)}]}
\end{equation}
By Bayes' rule, we have\begin{equation}
    \mu_t(x)=\mathbb{E}_{\mathbb{Q}}[X_1\mid X_t=x]=\frac{\mathbb{E}[X_1e^{r(X_1)}\mid X_t=x]}{\mathbb{E}[e^{r(X_1)}\mid X_t=x]}
\end{equation}
Just as the probability flow drift of (\ref{eq:gensde}) is \begin{equation}
    b_t(x)=\mathbb{E}[X_1-X_0\mid I_t=x]=\frac{1}{1-t}(\mathbb{E}[X_1\mid X_t=x]-x)
\end{equation}
the probability flow drift of the controlled SDE (\ref{controlsde}) is \begin{equation}
    b_t^{\mathbb{Q}}(x) = \frac{1}{1-t}(\mu_t(x)-x)
\end{equation}
On the other hand, we know that the tilted marginal (\ref{tiltmarg}) satisfies $p_t^*(x)\propto p_t(x)\, e^{V_t(x)}$, so \begin{equation}
   \nabla \log p_t^*(x)=\nabla \log p_t(x)+\nabla V_t(x) 
\end{equation}
Rewriting the probability flow drift, we have
\begin{align}
    b_t^{\mathbb{Q}}(x) &=(b_t(x)+\frac{\sigma_t^2}{2}\nabla\log p_t(x)+\sigma_t^2\nabla V_t(x))-\frac{\sigma_t^2}{2}\nabla\log p^*_t(x) \\ &= b_t(x)+\frac{\sigma_t^2}{2}\nabla V_t(x)
\end{align}
Rearrangement gives \begin{equation}
    \nabla V_t(x)=\frac{2}{\sigma_t^2}\left[\frac{\mu_t(x)-x}{1-t}-b_t(x)\right]
\end{equation}
We remark that this argument works in general for any interpolant $I_t=\alpha_tX_0+\beta_tX_1$, not just linear stochastic interpolants. Now it remains to prove Monte Carlo consistency. Let $f(x)=xe^{r(x)}$ and $g(x)=e^{r(x)}$. By the finiteness of expectation and the i.i.d. sampling of $W^j$, the strong law of large numbers gives
\begin{align}
    \frac{1}{Z}\sum_{j=1}^Z f(\Phi_{t,1}(x,W^j))\to\mathbb{E}[f(X_1)\mid X_t=x] \quad \text{a.s.}\\
    \frac{1}{Z}\sum_{j=1}^Z g(\Phi_{t,1}(x,W^j))\to\mathbb{E}[g(X_1)\mid X_t=x] \quad \text{a.s.}
\end{align}
component-wise in $\mathbb{R}^d$, using the fact that It\^o map $\Phi$ exactly does posterior sampling. This implies that the ratio $\hat{\mu}_t^Z(x)\to\mu_t(x)$ a.s. since the denominator is positive a.s. Plugging into the definition of the It\^o-GF estimator, yielding\begin{equation}
    \hat{I}_{GF}^Z(x)\to \nabla V_t(x)\quad \text{a.s.}
\end{equation}
This completes the proof of strong consistency.
\end{proof}
\begin{rem}
    In practice, we find it useful to use the following identity for numerical stability:\begin{equation}
        b_t(x)=\frac{\mathbb{E}[X_1\mid X_t=x]-x}{1-t}
    \end{equation}
so that the It\^o-GF estimator can be rewritten as:
        \begin{equation}
        \hat{I}_{GF}^Z=\frac{2}{\sigma_t^2}\left[\frac{1}{1-t}\cdot\left(\hat{\mu}^Z_t(x) -\frac{1}{Z}\sum_{j=1}^Z\Phi_{t,1}(x,W^j)\right)\right].
    \end{equation}
\end{rem}
This is the expression that we actually use for the MNIST and ImageNet-256 experiments.

\subsection{Proof of Theorem \ref{bel}}
We now prove Theorem \ref{bel}:
\begin{proof}
    From \citet[Theorem D.1]{BEL} we know that
    \begin{align}
        \nabla_x \mathbb{E}_{\mathbb{P}}[F(X_T)\mid X_s=x]
        &= \mathbb{E}_{\mathbb{P}}\left[F(X_T) \int_s^T\alpha_{t\mid s}J^{\top}_{t\mid s}\sigma_t^{-1}dW_t \mid X_s=x\right].
    \end{align}
    Applying this identity, we get
    \begin{align}
        \nabla_x \log \mathbb{E}_{\mathbb{P}}[F(X_T)\mid X_s=x]
        &= \frac{\mathbb{E}_{\mathbb{P}}\left[F(X_T) \int_s^T\alpha_{t\mid s}J^{\top}_{t\mid s}\sigma_t^{-1}dW_t \mid X_s=x\right]}
        {\mathbb{E}_{\mathbb{P}}[F(X_T)\mid X_s=x]}.
    \end{align}
    Since $F$ is positive and we are dividing by the normalization constant $\mathbb{E}_{\mathbb{P}}[F(X_T)\mid X_s=x]$ on the right hand side, this is by definition the integral with respect to $\mathrm{d}\mathbb{Q} \propto F(X_T) \mathrm{d} \mathbb{P}$:
    \begin{align}
    \frac{\mathbb{E}_{\mathbb{P}}\left[F(X_T) \int_s^T\alpha_{t\mid s}J^{\top}_{t\mid s}\sigma_t^{-1}dW_t \mid X_s=x\right]}
    {\mathbb{E}_{\mathbb{P}}[F(X_T)\mid X_s=x]} 
    = 
    \mathbb{E}_{\mathbb{Q}}\left[\int_s^T\alpha_{t\mid s}J^{\top}_{t\mid s}\sigma_t^{-1}dW_t \mid X_s=x\right],
    \end{align}
    which concludes the proof.
\end{proof}

In the paper, we chose $\alpha_{t\mid s}\propto \sigma_t$ as our default BEL estimator. To normalize it correctly with $\int_s^1\alpha_{t\mid s}dt=1$, we define $\alpha $ as:
\begin{equation}
    \alpha_{t\mid s} \coloneqq (1-t)^{1/2}/(1-s)^{3/2}. \label{defbel}
\end{equation} 

\section{Implementation and Experiment Details}\label{appendb}
\paragraph{Limitations.} Our Brownian representation uses a fixed number of leading KL coefficients / dyadic wavelet depth, chosen empirically; a systematic study of this compression is left for future work. At ImageNet scale, our strongest reward-steering results use an It\^o map trained with a pretrained Decoupled MeanFlow diagonal teacher. Improving the self-distilled ImageNet model further, especially relative to strong deterministic flow maps, remains an important direction.
\paragraph{Brownian simulation.} In all global KL experiments, we first simulate standard Brownian increments on a uniform time grid over [0,1]. We use these increments both to compute the martingale path $M_t=\int_0^t\sigma_udW_u$, approximated by cumulative sums of $\sigma_t\Delta W_t$, and to extract the leading KL coefficients used as Brownian conditioning inputs. For local dyadic wavelet experiments, we simulate standard Brownian increments on a uniform grid over the local interval $[s,t]$ and then reweight them using $\sigma_t$. We take the cumulative sums and compute the dyadic features. For low-dimensional GMM experiments we use MLPs, while for MNIST and ImageNet-256, we use a Scalable Interpolant Transformer (SiT) \citep{sit} whose backbone is DiT \citep{dit}.
\paragraph{1D-GMM.}
To verify that the neural network can interpret KL coefficients as Brownian information and that our It\^o map training algorithms work, we first train a one-dimensional model, where $X_0\sim \mathcal{N}(0,1)$ and $X_1\sim \frac{1}{2}\mathcal{N}(-1,1)+\frac{1}{2}\mathcal{N}(1,1)$. Although the drift term here is explicitly computable, we treat $X_1$ as the data distribution and use simulation-free LSD and PSD objectives to train the model, rather than the simulation-full MSE loss. In terms of architecture, we use a single 6-layer MLP with hidden size 256. We choose $\sigma_t$ to be $\sqrt{2(1-t)}$. The Brownian path is simulated on a time grid with 200 grid points, and then converted to 5 KL coefficients. To ensure that the all-time stochastic dynamics is fully captured by the model, instead of just logging the endpoint distribution, we log the one-shot generation from time zero to every grid point $t=i/200$. The trajectory formed this way $(\hat{X}(0,\frac{i}{200},X_0,W))_{i=0,...,199}$ aligns well with the true SDE roll-out.
\paragraph{2D-GMM training.} For the 2-dimensional Gaussian mixture model, we train a single 12-layer MLP with hidden size 512. We treat the target distribution $p_1=\frac{1}{3}\mathcal{N}((-3,-3),0.25I)+\frac{1}{3}\mathcal{N}((0,0),0.25I)+\frac{1}{3}\mathcal{N}((3,3),0.25I)$ as a data distribution, set $\sigma_t=\sqrt{2(1-t)}$ and train an It\^o map $\hat{G}_{s,t}(x,W)$ with Lagrangian self-distillation. The Brownian motion is simulated on a 2-dimensional space, and the time interval $[0,1]$ is cut into 200 discretizations. We then convert the Brownian motion into 5 KL coefficients per dimension. The goal of this 2D-GMM experiment is to evaluate whether the proposed control estimators steer samples toward a known posterior distribution. We define the tilted distribution to be the posterior of a linear observation \begin{equation}
    p(x\mid 1.2x_1-0.8x_2+0.2\varepsilon=-1)
\end{equation}
for a standard Gaussian $\varepsilon$. This setup matches the corresponding Meta Flow Maps experiment, enabling a direct comparison. Starting from 4096 noise samples, we use different control estimators to guide the stochastic dynamics. We compare the sample distribution under each steering estimator against the true posterior, and compute the sliced Wasserstein-2 distance (S-W2) and maximum mean discrepancy (MMD) between them.
\paragraph{MNIST training.} For MNIST, we train an It\^o map directly in pixel space using a SiT-B model with depth 12, 12 attention heads, hidden dimension 768, and patch size 4. For the global KL approach, we use 5 leading KL coefficients per pixel as Brownian conditioning; for the local dyadic wavelet representation, we use 128 discretization steps for each local interval $[s,t]$ and use a depth-4 feature vector per pixel. For inference-time steering, we use Jeril Sebastian's pretrained MNIST classifier model on Hugging Face \citep{sebastian_mnist_classifier} to define the class-mixture reward described in Section \ref{MNinfsteer}. The It\^o map model used in the steering experiments was trained with global KL features. We compare It\^o-G, It\^o-GF, BEL, BEL-I, DPS, and an SMC baseline by measuring the L2 distance between the generated class histogram and the target class mixture.

\paragraph{SiT architecture for ImageNet-256 training.} For ImageNet-256, we use a SiT-XL backbone with depth 28, 16 attention heads, hidden dimension 1152, and patch size 2. We consider two variants: a self-distilled It\^o map trained in VA-VAE latent space, used for the posterior-sampling visualization in Figure \ref{fig:posterior_sampling_row}, and a teacher-guided It\^o map trained in SD-VAE latent space using a Decoupled MeanFlow diagonal teacher, used for reward steering. In both variants, we use 5 leading KL coefficients per latent coordinate as Brownian conditioning; or in the dyadic case we take 256 discretizations for $[s,t]$ and extract dyadic wavelets up to level 5. We use a batch size of 512, and a learning rate schedule: lr begins with $7.5e-4$ and gets a multiplier of 0.995 per 500 training steps until it reaches a minimum value of $2e-4$. The diagonal times are sampled uniformly from $[0,1]$; while the off-diagonal times are sampled using a logit-normal distribution: we sample $Z_t\sim\mathcal{N}(0,I)$ and $Z_s\sim \mathcal{N}(0,I)$ independently. Then define $t=\text{sigmoid}(0.6+Z_t)$ and $s=t \cdot\text{sigmoid}(Z_s)$. For classifier free guidance, we sample the cfg scale $w$ from $U[1,5]$ and apply it via $v=wv_{\text{uncond}}+(1-w)v_{\text{cond}}$. The class drop-out probability is set to be 0.1.
\paragraph{Input format into the architecture (with KL representation).} For 1- and 2-dimensional Gaussian mixture model, the input into the MLP consists of
\begin{equation}
    (s,t,X_s,5\text{ leading KL coefficients)}.
\end{equation}
For MNIST and ImageNet-256, the input into the SiT consists of  \begin{equation}
    (s,t,X_s,\text{5 leading KL coefficients}, M_t-M_s).
\end{equation}
$M_t-M_s$ is itself a term derived from the Brownian motion. Although $M_t-M_s$ is determined by the Brownian path, providing it explicitly improves optimization and sample quality in our high-dimensional experiments. Let $B$ be the batch size, $C$ be the channel dimension, $H$ be the latent height and $W$ be the latent width, we concatenate the $X_s$, KL coefficients, $M_t-M_s$ inputs into the transformer as follows:\begin{itemize}
    \item $X_s$ has shape $[B,C,H,W]$, after pooling with patch size 2 and projection this becomes $[B,\frac{H}{2}\times\frac{W}{2}, D]$ where $D$ is the hidden dimension of the SiT (in our case, 1152). We call $T\coloneqq\frac{H}{2}\times \frac{W}{2}$ the number of tokens. After preparation, the state $X_s$ has tensor shape $[B,T,D]$.
    \item the KL coefficients input has shape $[B,C,H,W,\text{kl\_modes}]$ (in our case kl\_modes $=5$), which we view as $[B,C\times\text{kl\_modes},H,W]$. After 2-pooling, we get $[B,(H/2)\times(W/2), C\times\text{kl\_modes}]$ and then project to shape $[B,T,D]$.
    \item $M_t-M_s$ has the same shape as $X_s$, namely $[B,C,H,W]$. The same operations turn it into $[B,T,D]$.
\end{itemize}
We then concatenate the three tensors together to $[B,T,3D]$ and then project to the standard shape $[B,T,D]$ for the transformer to process.
\paragraph{Input format into the architecture (with dyadic representation).} For MNIST and ImageNet-256 datasets, we explore training the It\^o map model with dyadic wavelets. The input into the SiT consists of 
\begin{equation}
    (s,t,X_s,\text{depth-4 or depth-5 dyadic wavelet features})
\end{equation}
and again:\begin{itemize}
    \item $X_s$ has shape $[B,C,H,W]$, after pooling and embedding the tensor shape is $[B,T,D]$.
    \item dyadic feature vector has shape $[B,C,H,W,2^{\text{dyadic depth}}]$, pooling and projection gives $[B,T,D]$.
\item let $db\_\text{token}=b$\_embed(dyadic feature) $-b$\_embed(zero tensor) and pass it through an MLP of shape Linear$(D,2D)-$SiLU$-$Linear$(2D,D)$ with no bias.
\end{itemize}
The final input is given by $X\_\text{token}+\text{pos}\_\text{embed}+db\text{\_token}$.
 \paragraph{Training with teacher-guidance (e.g. Decoupled MeanFlow \citet{dmf}).} 
 Many deterministic flow-map models provide a well-trained diagonal velocity $b_t(x)=\mathbb{E}[X_1-X_0\mid I_t=x]$. We use such pretrained models to accelerate It\^o map training by converting their diagonal velocity into the diagonal drift required by our SDE with one caveat: the diagonal in deterministic flow maps $b_t(x)$ is different from our drift: \begin{equation}
     G_{t,t}(x,W) = \mathbb{E}[X_1-2X_0\mid I_t=x].\label{ourdrift}
 \end{equation}
To address this issue, we make a simple observation here: \begin{equation}
     x = \mathbb{E}[tX_1+(1-t)X_0\mid I_t=x] \label{trivial}
 \end{equation}
which follows trivially from the definition. Solving simultaneous equations (\ref{ourdrift}) and (\ref{trivial}), we derive
\begin{equation}
    \mathbb{E}[X_0\mid I_t=x]=x-t\cdot b_t(x)
\end{equation}
and thus\begin{equation}
    G_{t,t}(x,W)=(1+t)b_t(x)-x
\end{equation}
The teacher-guided Lagrangian consistency training algorithm is given in Algorithm \ref{alg:teacher_lagrangian}.

  \begin{algorithm}
    \caption{Training with teacher guidance from a pretrained model using the Lagrangian loss.}
    \label{alg:teacher_lagrangian}

    \begin{algorithmic}[1]
      \Input $p_0 = p_{\text{noise}},\; p_1 = p_{\text{data}}$; model $\hat{G}_{s,t}(x,W)$; pretrained deterministic model $v^{\text{pretrained}}_{s,t}(x)$
      \Repeat
        \State \textbf{Compute diagonal loss:}
        \State Sample $t \sim U[0,1]$; simulate $X_0 \sim p_0$ and $X_1 \sim p_{\text{data}}$
        \State $I_t \gets tX_1 + (1-t)X_0$
        \State $\mathcal{L}_{\mathrm{SI}} \gets \|\hat{G}_{t,t}(I_t,\mathrm{zeros}) - ((1+t)v^{\text{pretrained}}_{t,t}(I_t) - I_t)\|_2^2$

        \State \textbf{Compute consistency loss:}
        \State Sample $(s,t)\sim U[0,1]^2$ and reorder so that $s<t$; simulate $X_0\sim p_0$ and $X_1\sim p_{\text{data}}$
        \State Simulate Brownian trajectory $(M_u)_{u\in[0,1]}$ and convert to Brownian features $W$
        \State $I_s \gets sX_1 + (1-s)X_0$
        \State $X_t \gets I_s + (t-s)\hat{G}_{s,t}(I_s,W) + M_t - M_s$
        \State $\mathcal{L}_{\mathrm{Lagrangian}} \gets
        \bigl\|\hat{G}_{s,t}(I_s,W)
        + (t-s)\partial_t \hat{G}_{s,t}(I_s,W)
        - ((1+t)v^{\text{pretrained}}_{t,t}(X_t) - X_t)\bigr\|_2^2$

        \State $\mathrm{loss} \gets \mathcal{L}_{\mathrm{SI}} + \lambda \mathcal{L}_{\mathrm{Lagrangian}}$
        \State Update model by taking one optimizer step
      \Until{convergence}
      \Output Trained It\^o map $\hat{X}_{s,t}(x,W) = x + (t-s)\hat{G}_{s,t}(x,W) + M_t - M_s$
    \end{algorithmic}
  \end{algorithm}
  
For the Decoupled MeanFlow model, note that their denoising process runs from $t=1$ to $t=0$, i.e. $X_1\sim p_{\text{noise}}$ and $x_0\sim p_{\text{data}}$. This means when querying their pretrained model checkpoint, the correct expression for $b_t$ is
\begin{equation}
    v_t^{\text{pretrained}}(x,y) = -\text{dmf\_}v(1-t,1-t,x,y)
\end{equation}
where dmf$\_v$ is the deterministic average velocity in reverse time and $y$ is the class label associated with $x$. One may also use pretrained teacher guidance for the progressive objective. The diagonal loss would be the same as in the algorithm above: \begin{equation}
    \mathcal{L}_{\text{SI}}\leftarrow\|\hat{G}_{t,t}(I_t,\text{zeros})-((1+t)v_{t,t}^{\text{pretrained}}(I_t)-I_t)\|_2^2
\end{equation} and the off-diagonal objective is the same as that for PSD.

\paragraph{Normalization of the steered drift estimator (NSDE).}
The estimators in Section 3.2 produce estimates of the optimal control direction at the current state. In the controlled SDE, the corresponding control drift is added to the base drift during sampling. In large-scale latent-space steering, the magnitude of the Monte Carlo control estimate is often not well calibrated: depending on the reward, timestep, and estimator, the control can be too small to noticeably affect the trajectory or too large to produce stable samples.

Following the NSDE used in Meta Flow Maps (F.3.4 in \citep{mfm}), we use a simple magnitude normalization for ImageNet steering. At each sampling step, we compute the base drift from the diagonal It\^o map and compute an estimated control drift using It\^o-G, It\^o-GF, BEL, or BEL-I. We then keep the direction of the control drift fixed, but rescale its norm to be comparable to the norm of the base drift. Concretely, if we denote the base SDE drift by 
\begin{equation}
    B_t(x) \coloneqq b_t(x)+\frac{\sigma_t^2}{2}\nabla \log p_t(x)
\end{equation}
and the estimated control drift by $\mathcal{V}_t(x)=\sigma_t^2\nabla V_t(x)$, then the drift used in the steered roll-out would be:\begin{equation}
    B_t(x) + \frac{\|B_t(x)\|}{\|\mathcal{V}_t(x)\|+\epsilon}\mathcal{V}_t(x)
\end{equation} where $\epsilon$ is a small constant.

This normalization is only a numerical stabilization of the controlled roll-out. It does not change the definitions of It\^o-G, It\^o-GF, BEL, or BEL-I, and it is not used during training. Unless otherwise stated, the ImageNet reward-steering results use NSDE.

\subsection{Compute Resources} We use NVIDIA RTX PRO 6000 Blackwell GPUs with 96GB memory. The 1D and 2D GMM models each took less than 1 hour to train, the MNIST model took about 2 hours, and each ImageNet-256 It\^o map took about 72 hours. For ImageNet steering, It\^o-G, It\^o-GF, and BEL-I took less than 1 minute per generation run, while BEL took about 5 minutes due to additional neural-network calls.\label{compute}
\subsection{Broader Impact and Responsible Use}\label{broader}
This paper studies stochastic generative modeling and inference-time control. The proposed It\^o map framework may improve the efficiency and flexibility of posterior sampling, stochastic simulation, and reward-guided image generation. Potential risks are similar to those of generative image models more broadly: stronger steering methods could be misused to generate misleading, harmful, or biased synthetic images, especially if combined with open-ended text-to-image systems. Our experiments are conducted on public benchmark datasets and pretrained models, and we do not release a deployment-ready image-generation system, new dataset, or new pretrained generative checkpoint. Responsible deployment should include appropriate content filtering, provenance or watermarking mechanisms, auditing of reward models and prompts, and human review in high-stakes applications. Our experiments do not involve human subjects, crowdsourcing, or personally identifiable information.
\subsection{More ImageNet-256 Results}

\paragraph{Details for Table \ref{tab:reward} / ImageNet steering protocol.} \label{table1append}

We provide additional details for the ImageNet reward-steering results reported in Table \ref{tab:reward}. All inference-time steering experiments are performed in the SD-VAE latent space using an It\^o map trained with a Decoupled MeanFlow (DMF) diagonal teacher. In inference-time steering, the generative SDE drift $G_{t,t}(x)=\mathbb{E}[X_1-2X_0\mid I_t=x]$ (as in \eqref{eq:gensde}) is obtained from the diagonal of the pretrained DMF model by \begin{equation}
    (1+t)v_t^{\text{pretrained}}(x)-x
\end{equation} and the stochastic control terms are then estimated using our trained It\^o map and its directional derivatives.

For each steering run, we use 50 discretization steps. At each time step $t_i$, the optimal control estimator is computed using 8 Monte Carlo Brownian trajectories on the remaining time interval $[t_i,1]$. We use reward scale $\lambda=3.0$ and apply NSDE normalization when applicable. 

For each class-prompt pair, we repeat the experiment 4 times and report the average reward. All methods are evaluated on the same set of classes, prompts, random initializations, and Brownian trajectories whenever applicable. Tables \ref{tab:imagereward_scores} and \ref{tab:hpsv2_scores} give the per-prompt ImageReward \citep{imagereward} and HPSv2 \citep{hpsv2} scores corresponding to the aggregate results in Table \ref{tab:reward}.

Tables \ref{tab:imagereward_scores} and \ref{tab:hpsv2_scores} show that It\^o-G gives the strongest average performance on both ImageReward and HPSv2. MFM-G is the closest baseline on ImageReward, while It\^o-G has a larger advantage on HPSv2. It\^o-GF improves over the unsteered SDE and DPS baselines and is competitive with Best-of-100 and MFM-GF. The BEL-family estimators give smaller gains, with more consistent improvements on ImageReward than on HPSv2. Overall, the per-prompt results support the main conclusion that It\^o map steering is strongest when reward gradients are available, while gradient-free estimators remain useful but more variable across settings.
\begin{table}[H]
\centering
\setlength{\tabcolsep}{1.5pt}
\begin{tabular}{@{}r L{0.26\linewidth} r r r r r r r r r r@{}}
\toprule
Cls. & Prompt & SDE & It\^o-GF & It\^o-G & BEL & BEL-I & DPS & Bo10 & Bo100 & MFM-GF & MFM-G \\
\midrule
663 & a stone monastery & -0.02 & 0.79 & 1.91 & 0.37 & 0.13 & -0.22 & 0.69 & 1.09 & 1.23 & 1.84 \\
929 & a pink popsicle & -0.40 & 1.77 & 1.98 & 1.29 & -0.15 & 0.43 & 1.09 & 1.82 & 1.49 & 1.96 \\
927 & a strawberry trifle & 1.23 & 1.81 & 1.98 & 1.41 & 1.07 & 1.21 & 1.78 & 1.86 & 1.83 & 1.96 \\
963 & a Pepperoni pizza & 0.05 & 0.60 & 1.14 & 0.30 & 0.12 & 0.26 & 0.42 & 0.55 & 0.54 & 1.05 \\
483 & a stone castle & 0.50 & 0.91 & 1.73 & 0.69 & 0.57 & 0.59 & 0.81 & 1.04 & 1.10 & 1.74 \\
933 & a very thick burger & 0.45 & 1.29 & 1.90 & 0.86 & 0.71 & 0.73 & 0.92 & 1.23 & 1.34 & 1.86 \\
360 & a very fat otter & -0.19 & 1.23 & 1.99 & 0.45 & 0.12 & 0.25 & 0.79 & 1.33 & 1.37 & 1.96 \\
974 & a geyser erupting straight upward with heavy steam & 0.69 & 1.31 & 1.85 & 1.04 & 0.71 & 0.88 & 1.05 & 1.35 & 1.41 & 1.79 \\
579 & an open grand piano & 0.70 & 1.15 & 1.93 & 0.95 & 0.45 & 0.55 & 1.29 & 1.58 & 1.63 & 1.90 \\
497 & a gothic church & 0.34 & 1.16 & 1.82 & 0.65 & 0.45 & 0.30 & 0.88 & 1.02 & 1.20 & 1.76 \\
849 & a ceramic teapot & 0.79 & 1.64 & 1.91 & 1.13 & 0.81 & 0.92 & 1.40 & 1.53 & 1.56 & 1.85 \\
934 & a mustard hotdog & 0.03 & 1.08 & 1.93 & 0.35 & -0.10 & -0.02 & 0.68 & 0.88 & 0.87 & 1.69 \\
29 & a floating axolotl & 0.36 & 1.20 & 2.00 & 0.50 & 0.54 & 0.38 & 1.14 & 1.60 & 1.73 & 2.00 \\
117 & a spiral nautilus & 1.23 & 1.81 & 1.99 & 1.77 & 1.47 & 1.51 & 1.84 & 1.89 & 1.88 & 1.98 \\
850 & a stitched teddy bear & 0.43 & 1.31 & 1.97 & 0.72 & 0.57 & 0.78 & 1.04 & 1.33 & 1.29 & 1.89 \\
247 & a droopy saint bernard & 0.49 & 1.71 & 1.98 & 1.39 & 0.34 & 0.24 & 1.54 & 1.67 & 1.67 & 1.97 \\
\midrule
\multicolumn{2}{@{}l}{Average} & 0.42 & 1.30 & \textbf{1.88} & 0.87 & 0.49 & 0.55 & 1.09 & 1.36 & 1.38 & 1.83 \\\bottomrule
\end{tabular}
\caption{ImageReward scores by prompt and method.}\label{tab:imagereward_scores}
\end{table}\vspace{-1em}
\begin{table}[H]
\centering
\setlength{\tabcolsep}{1.5pt}
\begin{tabular}{@{}r L{0.26\linewidth} r r r r r r r r r r@{}}
\toprule
Cls. & Prompt & SDE & It\^o-GF & It\^o-G & BEL & BEL-I & DPS & Bo10 & Bo100 & MFM-GF & MFM-G \\
\midrule
663 & a stone monastery & 0.25 & 0.28 & 0.38 & 0.25 & 0.25 & 0.27 & 0.27 & 0.28 & 0.25 & 0.25 \\
929 & a pink popsicle & 0.23 & 0.28 & 0.35 & 0.23 & 0.25 & 0.24 & 0.26 & 0.29 & 0.24 & 0.23 \\
927 & a strawberry trifle & 0.27 & 0.31 & 0.37 & 0.26 & 0.24 & 0.25 & 0.29 & 0.31 & 0.27 & 0.27 \\
963 & a Pepperoni pizza & 0.28 & 0.30 & 0.37 & 0.28 & 0.28 & 0.28 & 0.30 & 0.31 & 0.27 & 0.26 \\
483 & a stone castle & 0.28 & 0.32 & 0.40 & 0.27 & 0.28 & 0.28 & 0.29 & 0.31 & 0.28 & 0.29 \\
933 & a very thick burger & 0.25 & 0.29 & 0.38 & 0.25 & 0.26 & 0.25 & 0.29 & 0.29 & 0.27 & 0.27 \\
360 & a very fat otter & 0.27 & 0.28 & 0.37 & 0.26 & 0.27 & 0.27 & 0.30 & 0.31 & 0.27 & 0.27 \\
974 & a geyser erupting straight upward with heavy steam & 0.25 & 0.29 & 0.40 & 0.25 & 0.25 & 0.26 & 0.27 & 0.29 & 0.27 & 0.27 \\
579 & an open grand piano & 0.24 & 0.26 & 0.37 & 0.23 & 0.23 & 0.23 & 0.27 & 0.28 & 0.25 & 0.24 \\
497 & a gothic church & 0.28 & 0.30 & 0.40 & 0.28 & 0.27 & 0.28 & 0.30 & 0.31 & 0.27 & 0.27 \\
849 & a ceramic teapot & 0.24 & 0.26 & 0.37 & 0.23 & 0.23 & 0.23 & 0.28 & 0.28 & 0.27 & 0.26 \\
934 & a mustard hotdog & 0.25 & 0.30 & 0.37 & 0.26 & 0.27 & 0.26 & 0.29 & 0.30 & 0.26 & 0.26 \\
29 & a floating axolotl & 0.24 & 0.27 & 0.39 & 0.24 & 0.24 & 0.24 & 0.27 & 0.28 & 0.25 & 0.24 \\
117 & a spiral nautilus & 0.24 & 0.27 & 0.37 & 0.24 & 0.25 & 0.25 & 0.27 & 0.29 & 0.26 & 0.25 \\
850 & a stitched teddy bear & 0.24 & 0.32 & 0.41 & 0.25 & 0.26 & 0.26 & 0.30 & 0.32 & 0.28 & 0.26 \\
247 & a droopy saint bernard & 0.26 & 0.31 & 0.39 & 0.26 & 0.24 & 0.25 & 0.29 & 0.30 & 0.27 & 0.26 \\
\midrule
\multicolumn{2}{@{}l}{Average} & 0.25 & 0.29 & \textbf{0.38} & 0.25 & 0.25 & 0.26 & 0.28 & 0.30 & 0.26 & 0.26 \\
\bottomrule
\end{tabular}
\caption{HPSv2 scores by prompt and method.}\label{tab:hpsv2_scores}
\end{table}
\newpage

\paragraph{Steering for class 360 volcano, prompt \textit{a volcano erupting with magma}.} Average SDE unsteered reward: 0.107.
\begin{figure}[H]
    \centering
    \includegraphics[width=0.8\linewidth]{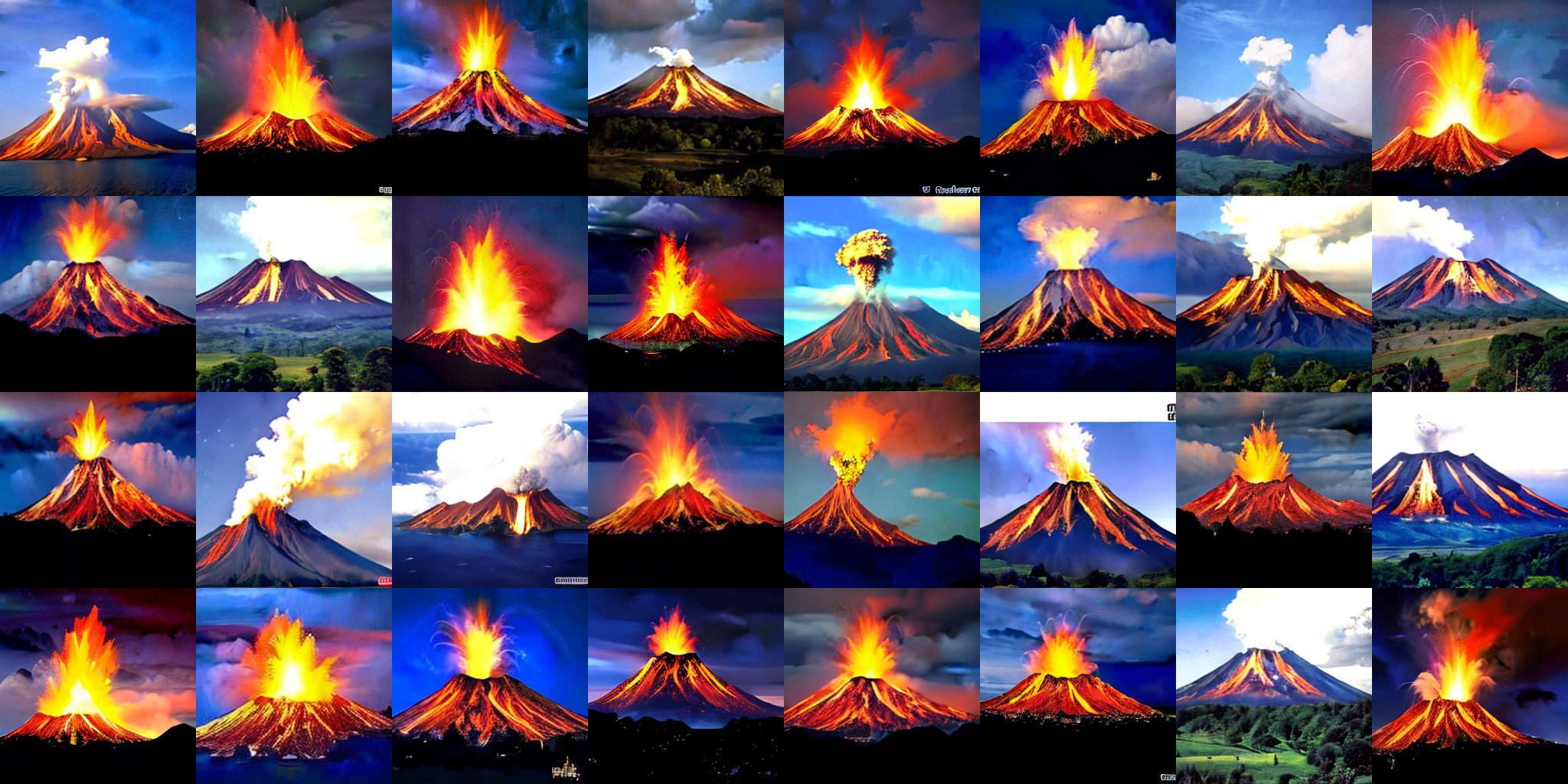}
    \caption{It\^o-G steered outputs using the same initialization $X_0$. 8 MC samples, reward scale $\lambda=3$, with ImageReward. Average steered reward 1.899.}
\end{figure}

\begin{figure}[H]
    \centering
    \includegraphics[width=0.8\linewidth]{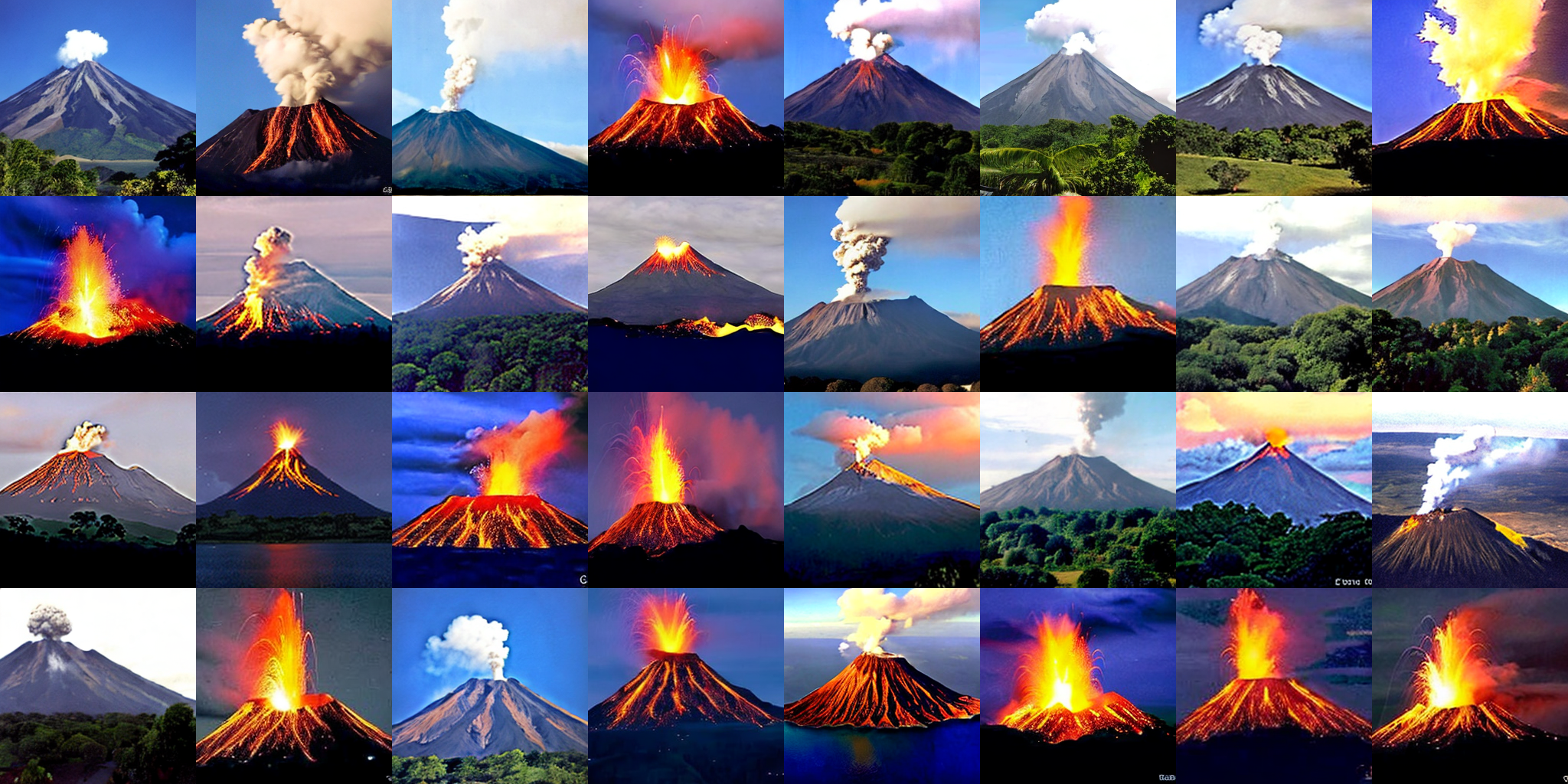}
    \caption{It\^o-GF steered outputs using the same initialization $X_0$. 8 MC samples, reward scale $\lambda=3$, with ImageReward. Average steered reward 1.394.}
\end{figure}

\begin{figure}[H]
    \centering
    \includegraphics[width=0.8\linewidth]{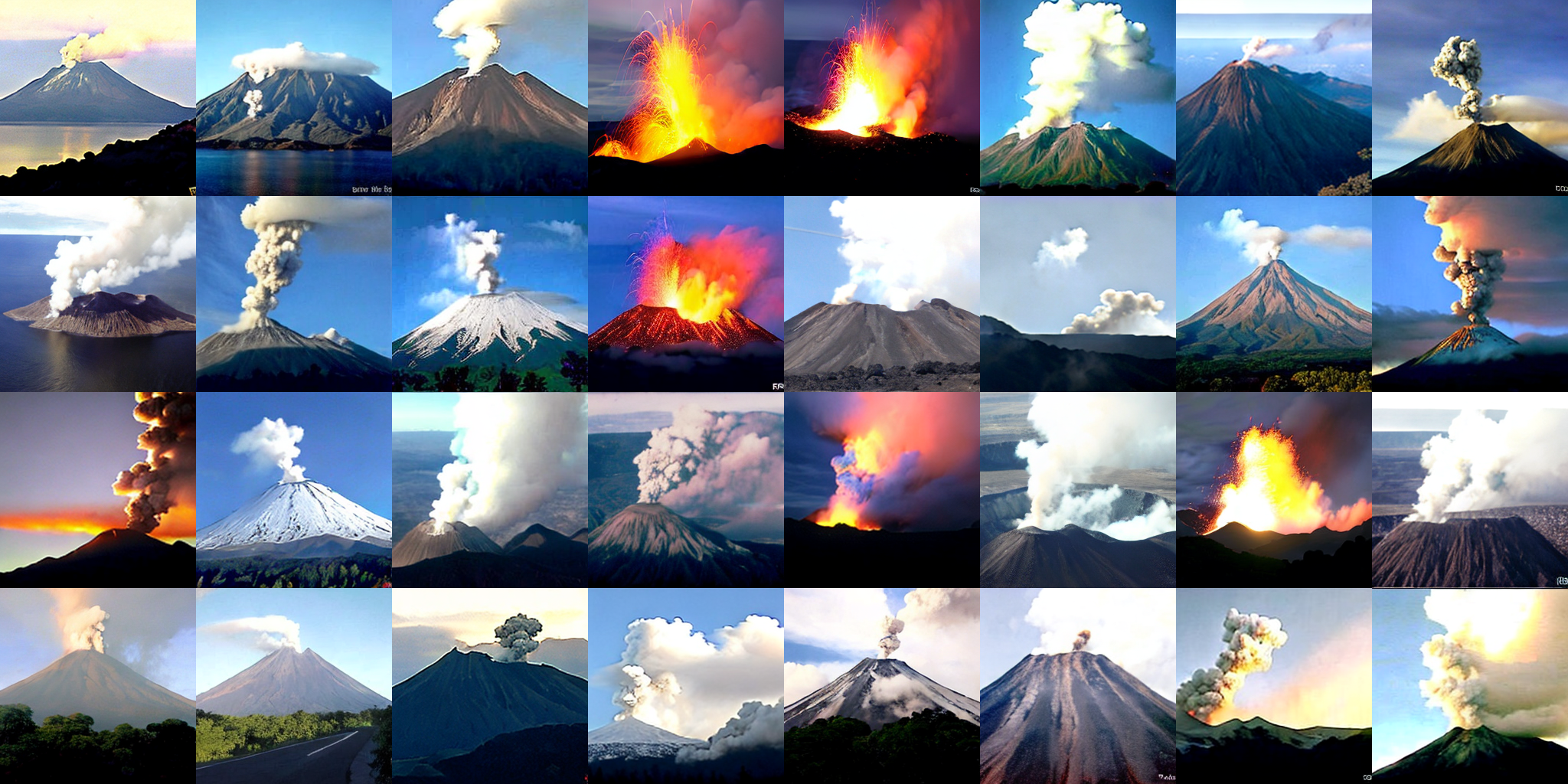}
    \caption{BEL steered outputs using the same initialization $X_0$. 8 MC samples, reward scale $\lambda=3$, with ImageReward. Average steered reward 0.723.}
\end{figure}
\begin{figure}[H]
    \centering
    \includegraphics[width=0.8\linewidth]{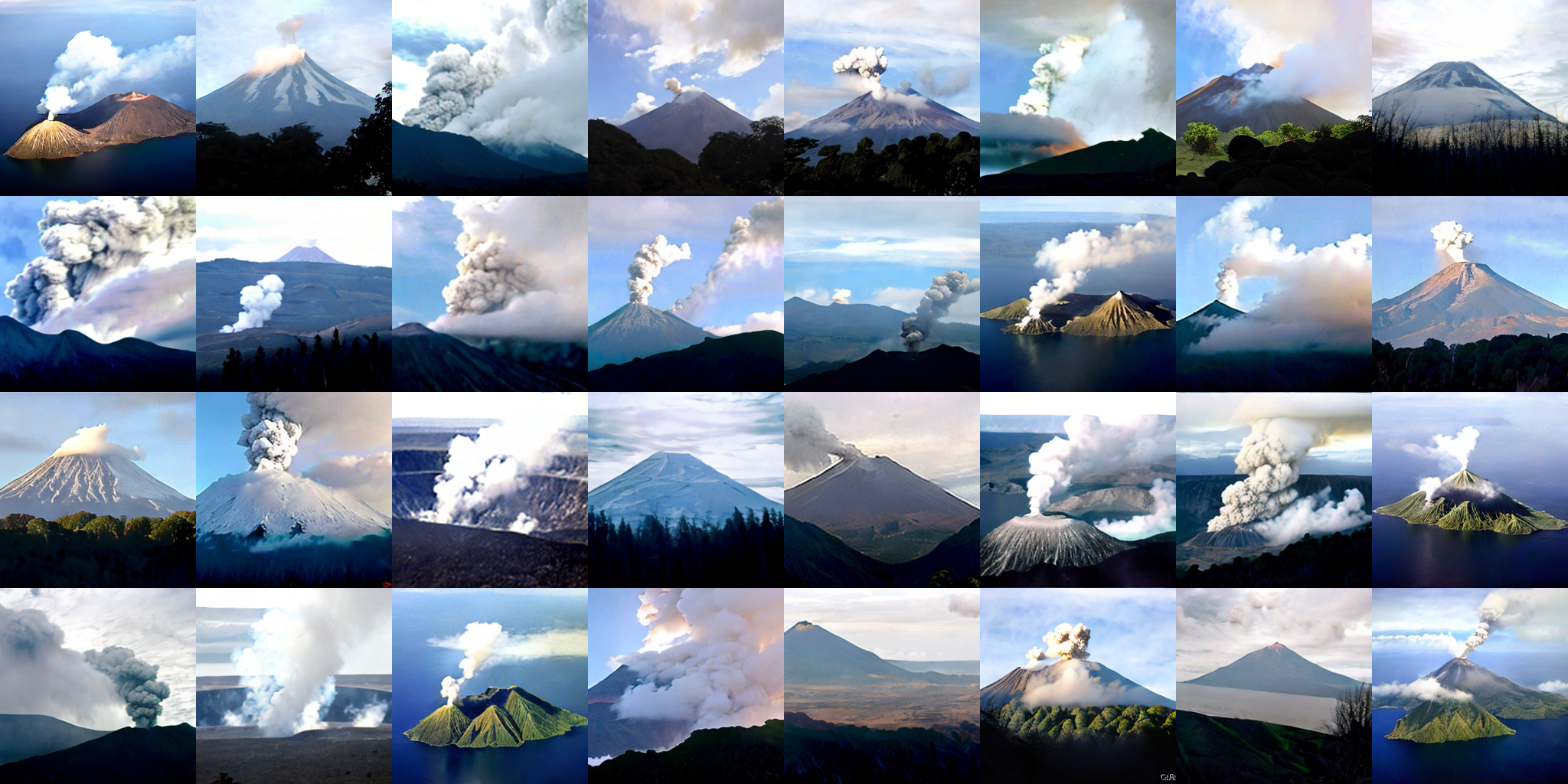}
    \caption{BEL-I steered outputs using the same initialization $X_0$. 8 MC samples, reward scale $\lambda=3$, with ImageReward. Average steered reward 0.342.}
\end{figure}



\end{document}